\pgfplotsset{compat=newest}
\let\@fnsymbol\@arabic
\newtheorem{theorem}{Theorem}
\newtheorem{corollary}{Corollary}
\newtheorem{remark}{Remark}
\newtheorem{definition}{Definition}
\newtheorem{proposition}{Proposition}
\newcommand{\V}{{\mathcal{V}}}
\renewcommand{\P}{{\mathcal{P}}}
\renewcommand{\L}{{\mathcal{L}}}
\newcommand{\E}{{\mathcal{E}}}
\newcommand{\R}{{\mathbb R}}
\newcommand{\id}{{\textnormal{id}}}
\newcommand{\Lspace}{{L^2([t_0, T], n)}}
\newcommand{\range}[2]{{\{{#1}, \dots,{#2}\}}}
\newcommand{\anon}{{\,\mbox{-}\,}}
\crefname{diagram}{diag.}{diags.}
\Crefname{diagram}{Diagram}{Diagrams}
\tikzstyle{vertex} = [fill,shape=circle,node distance=55pt]
\tikzstyle{edge} = [fill,opacity=.5,fill opacity=.5,line cap=round,
\tikzstyle{elabel} =  [fill,shape=circle,node distance=24pt]
\title{
    Parametric machines: a fresh approach to architecture search
    }
\author{
    Pietro Vertechi
    \footnotemark[1]
    \and Mattia G. Bergomi
    \thanks{Correspondence \url{pietro.vertechi@protonmail.com}, \url{mattiagbergomi@gmail.com}}
}
\date{}
\begin{document}
\captionsetup[subfigure]{position=top, labelfont=bf,textfont=normalfont,singlelinecheck=off,justification=raggedright}
\maketitle

\begin{abstract}
    Using tools from topology and functional analysis, we provide a framework where artificial neural networks, and their architectures, can be formally described.
    We define the notion of {\em machine} in a general topological context and show how simple machines can be combined into more complex ones. We explore finite- and infinite-depth machines, which generalize neural networks and neural ordinary differential equations.
    Borrowing ideas from functional analysis and kernel methods, we build complete, normed, infinite-dimensional spaces of machines, and we discuss how to find optimal architectures and parameters---within those spaces---to solve a given computational problem.
    In our numerical experiments, these kernel-inspired networks can outperform classical neural networks when the training dataset is small.
\end{abstract}

\section{Introduction}

\paragraph{Background.} In recent years, the deep learning framework  achieved and surpassed state-of-the-art performance in many machine learning tasks, using a variety of architectures. Notably, in the field of computer-vision, Convolutional Neural Networks showcase impressive performance~\cite{krizhevskyImageNetClassificationDeep2017}.
However, a paradoxical problem affects the performance and robustness of deep neural networks. Deeper networks should in principle perform at least as well as shallower ones, finding in the limit of infinite layers a solution where the extra layers approximate the identity function. However, \cite{heDeepResidualLearning2016} reports that deeper architectures can cause a degradation of performance not explained by overfitting. Choosing a deep architecture is therefore a difficult task, where one needs to rely on heuristics, or brute trial and error. Current approaches to automated architecture search~\cite{elskenNeuralArchitectureSearch} rely on large or augmented training datasets and manually engineered building-blocks. Moreover, they often lack principled regularization methods and guarantees of optimality. A first step in the study of these building-blocks from a topological-geometrical point of view has been done in~\cite{bergomiTopologicalGeometricalTheory2019}, showing the relevance of the contraction property in managing parametric spaces of group equivariant non-expansive operators (GENEOs).

\paragraph{Aim.} Our ambition is to define architectures with little or no human intervention. We interpret a neural architecture as a continuous family of endomorphisms on a function space. From this perspective, it is possible to swiftly parameterize complex architectures (e.g., multi-scale convolutional networks, shortcut connections), and thereafter \textit{sculpt} them while backpropagating by eliminating uninformative, noisy, and redundant connections.
This procedure allows for extremely simplified flows for designing neural architectures, requiring the user only to specify data-type, loss function, and type and number of activation functions to be considered. With this information, we define an \textit{as generous as possible} neural architecture constrained by the equivariance implied by the data-type (e.g. convolution for images). We then sculpt this architecture during training.

\paragraph{Contributions.}
We propose a theoretical framework where neural networks can be formally described as a special case of a more general construction: {\em parametric machines}.
Modularity---a fundamental property of standard neural architectures---is intrinsic to this construction: it is possible to create complex machines as a sum of simpler ones.
Our notion unifies seemingly disparate architectures, ranging from hand-designed combinations of layers, graphically represented here via a hypergraph, to networks defined via differential equations~\cite{chenNeuralOrdinaryDifferential2018}.
The key intuition is that a neural network can be considered as an endomorphism $f$ on a space of global functions (defined on all neurons on all layers). If such a network is feedforward, then $\id - f$ is invertible, and its inverse can be computed via a forward pass. The two broad classes of architectures that we describe here are the analogous of the classical results that $\id - f$ is invertible if $f$ is a linear nilpotent map (finite depth) or a contraction (infinite depth).
Infinite-depth machines generalize neural ordinary differential equations, by adding a choice of architecture. Unlike the finite-depth case, whose structure can be represented by a hypergraph, this architecture is defined in terms of continuous functions and, therefore, can be parameterized and optimized during training.
When the training dataset is small, we rely on kernel methods to guarantee optimality.
Finite- and infinite-depth {\em kernel machines} exhibit {\em all} shortcut connections, thus avoiding pathologies due to the architecture depth. Such dense connectivity does not cause a quadratic increase in the number of parameters in the case of small datasets. In addition to the theoretical framework, we test our main algorithms, namely {\em hypergraph neural architecture sculpting}, and {\em discrete} and {\em continuous kernel machines}, in three applications, proving their effectiveness, with a focus on small datasets (i.e., less than 100 training samples). Each algorithm has been wrapped as a PyTorch~\cite{paszkeAutomaticDifferentiationPyTorch2017} module, and can be used both as standalone or {\em layer} of a classical neural network architecture.

\paragraph{Structure.}
\Cref{sec:smallmachines} discusses the necessary preliminaries. Building on those, we introduce the notion of {\em machine} and its {\em stable state}. These generalize the connection between global nonlinear operators on function spaces and the forward pass of a layered neural network or neural Ordinary Differential Equation~(ODE), see~\cref{sec:hypergraphmachines,sec:VolterraMachines} respectively.
In~\cref{sec:kernelmachines}, taking advantage of the framework developed in \cref{sec:smallmachines,sec:discretecontinuousneuralnetworks}, we define a novel architecture based on operator-valued kernels and filtrations of Hilbert spaces.
The proposed constructions are tested on different tasks and compared with state-of-the-art methods.

\section{Machines}
\label{sec:smallmachines}

We lay our fundamental definitions in the context of topological vector spaces and (potentially nonlinear) functions between them. Most spaces we consider will have this dual nature (topological and linear). We denote with $\times$ the product of topological space, and with $\oplus$ the product of topological vector spaces.

\subsection{Stable state}
\label{sec:stablestate}

We start by describing how, in the classical deep learning framework, different layers are combined to form a network.
Intuitively, function composition seems the natural operation to do so. A sequence of layers
\begin{equation*}
    X_0 \xrightarrow{l_1} X_1 \xrightarrow{l_2} \dots X_{n-1} \xrightarrow{l_n} X_n
\end{equation*}
is composed into a map $X_0 \rightarrow X_n$. However, this intuition breaks down in the case of shortcut connections or more complex, non-sequential architectures.

From a mathematical perspective, a natural alternative is to consider a {\em global} space $X = \bigoplus_{i=0}^n X_i$, and the global endofunction
\begin{equation*}
    f = \sum_{i=1}^n l_i \colon X \rightarrow X.
\end{equation*}
What remains to be understood is the relationship between the function $f$ and the layer composition $l_n\circ l_{n-1}\circ \dots\circ l_2\circ l_1$. To clarify this relationship, we assume that the output of the network is the entire space $X$, and not only the output of the last layer, $X_n$.
Let the input function be the inclusion $g\colon X_0 \rightarrow X$. The network transforms $g$ into a map $h\colon X_0 \rightarrow X$, induced by $l_i\circ \dots \circ l_1\colon X_0 \rightarrow X_i$, for $i \in \range{0}{n}$. From a practical perspective, $h$ computes the activation values of all the layers and stores not only the final result, but also all the activations of the intermediate layers.

The key observation, on which our framework is based, is that $f$ and $g$ alone are sufficient to determine $h$. Indeed, $h$ is the only map $X_0\rightarrow X$ that respects the following property:
\begin{equation}
    \label{eq:network}
    h = g + f h.
\end{equation}
\Cref{eq:network} holds also in the presence of shortcut connections, or more complex architectures such as UNet~\cite{liHDenseUNetHybridDensely2018} (see~\cref{sec:hypergraphmachines}).
The existence of a unique solution to~\cref{eq:network} for any choice of input function $g$ will be the defining property of a {\em machine}, our generalization of a feedforward deep neural network.

\begin{definition}
    \label{def:machine}
    Let $M$ be a topological vector space. A continuous endofunction $f \colon M \rightarrow M$ is a \emph{machine} if, for all continuous map $g\colon X \rightarrow M$, there exists a unique continuous map $h\colon X \rightarrow M$ such that:
    \begin{equation*}
        h = g + f h.
    \end{equation*}
    We call $h$ the \emph{stable state} of $f$ with initial condition $g$, and denote by $S_f$ the stable state of $f$ with initial condition $\id_{M}$.
\end{definition}
The following result will be crucial to compute stable states in the remainder of this work.
\begin{theorem}
    \label{thm:oneminusf}
    $f \colon M \rightarrow M$ is a machine if and only if $\,\id-f$ is an isomorphism. Whenever that is the case, the stable state with initial condition $g$ is given by $(\id-f)^{-1}\circ g$. In particular, the stable state of $f$ is $S_f = (\id-f)^{-1}$.
\end{theorem}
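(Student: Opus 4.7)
The plan is to reduce the defining equation of a machine to an equation involving $\id-f$, and then use a clever application of uniqueness to upgrade a right inverse to a two-sided one.

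\textbf{Step 1: Reformulation.} First I would use the left-linearity of composition established in \cref{sec:categoricalpreliminaries} (the identity $(\lambda f + \mu g)\circ h = \lambda f\circ h + \mu g\circ h$) to rewrite
\begin{equation*}
    (\id-f)\circ h \;=\; \id\circ h - f\circ h \;=\; h - fh.
\end{equation*}
Therefore, the equation $h = g + fh$ from \cref{def:machine} is equivalent to $(\id-f)\circ h = g$. The rest of the proof is about this reformulated equation.

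\textbf{Step 2: Easy direction.} If $\id-f$ is an isomorphism, then for any $g\colon X\to \iota(M)$ the morphism $h := (\id-f)^{-1}\circ g$ satisfies $(\id-f)\circ h = g$ by construction, giving existence. For uniqueness, any other solution $h'$ would satisfy $(\id-f)\circ h' = (\id-f)\circ h$, and precomposing with $(\id-f)^{-1}$ yields $h' = h$. In particular, with $g = \id_{\iota(M)}$, we recover $S_f = (\id-f)^{-1}$.

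\textbf{Step 3: Hard direction.} This is where I expect the main obstacle, since the machine axiom a priori only asserts existence and uniqueness of solutions, not invertibility of $\id-f$. Assuming $f$ is a machine, I would apply the defining property twice. Applying it with $X = \iota(M)$ and $g = \id_{\iota(M)}$ produces a unique $S_f\colon \iota(M)\to\iota(M)$ with $(\id-f)\circ S_f = \id$; this exhibits $S_f$ as a right inverse of $\id - f$. To obtain a left inverse, I would apply the defining property again, this time with $X = \iota(M)$ and initial condition $g = \id - f$. The equation $(\id - f)\circ h = \id - f$ admits the obvious solution $h = \id_{\iota(M)}$; but it also admits $h = S_f\circ(\id - f)$, since
\begin{equation*}
    (\id - f)\circ S_f \circ (\id - f) \;=\; \id\circ(\id-f) \;=\; \id - f,
\end{equation*}
using the right-inverse property from the previous step. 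The uniqueness half of the machine axiom then forces $S_f\circ(\id-f) = \id$, so $S_f$ is a two-sided inverse of $\id - f$.

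\textbf{Step 4: Stable-state formula.} Combining the two directions, whenever $f$ is a machine the unique $h$ associated to a given $g$ is $(\id-f)^{-1}\circ g$, and in particular $S_f = (\id-f)^{-1}$, as claimed. The only subtlety throughout is respecting one-sided linearity: we never need $(\id-f)\circ$ to distribute through a sum in the right argument, only in the left argument, which is exactly what the preliminaries guarantee.
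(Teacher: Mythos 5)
Your proof is correct and follows essentially the same route as the paper's: both reduce $h = g + fh$ to $(\id-f)\circ h = g$ via left-linearity of composition, obtain a right inverse of $\id-f$ from the existence clause applied to $g=\id$, and extract left-invertibility from the uniqueness clause. The only cosmetic difference is that the paper shows $\id-f$ is monic in general (two preimages of $(\id-f)h$ are both stable states with that initial condition) and then invokes the fact that a monic split epimorphism is an isomorphism, whereas you specialize the uniqueness argument to the initial condition $g=\id-f$ to produce the left inverse $S_f\circ(\id-f)=\id$ directly.
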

\begin{proof}
    Let us assume that $f$ is a machine. $S_f = \id + f S_f$, so $(\id - f) S_f = \id$, hence $\id - f$ is a split epimorphism. Let $h, h'$ be such that $(\id - f)h = (\id - f)h'$. Then both $h$ and $h'$ are stable states of $f$ with initial condition $(\id - f)h$, hence they must be equal, so $\id - f$ is monic. A monic split epimorphism is necessarily an isomorphism.
    Conversely, let us assume that $\id - f$ is an isomorphism. Then $h = g + f h$ if and only if $h = (\id-f)^{-1} g$.
\end{proof}

\paragraph{Parametric machines.}
Let $P$ be a topological {\em parameter space}. A {\em parametric machine} is simply a continuous family of machines $f_p(m) \colon P \times M \rightarrow M$ such that, given a continuous family of input functions $g_p(m)$, the family of stable states $h_p$ is also jointly continuous in $p$ and $m$. We call $f_p$ a {\em parametric machine}, with {\em parameter space} $P$.

\subsection{Convergence and depth}
\label{sec:convergence_and_depth}

All nilpotent linear endomorphisms of a Banach space are machines. Continuous endofunctions with norm strictly smaller than $1$ (i.e., not necessarily linear contractions) are also machines.
In both cases, the stable state can be found by considering the following sequence:
\begin{equation}
    \label{eq:machinesequence}
    h_0 = \id
    \qquad\text{ and }\qquad
    h_{n+1} = \id + fh_n.
\end{equation}
Even though for different reasons, both in the nilpotent, linear case and in the contraction case, $\lVert h_{m}-h_n\rVert$ converges to $0$ for sufficiently large $m,n$.
If $f$ is nilpotent and linear, then $h_{n+1}-h_n = f(h_{n}-h_{n-1})$, so it will go to $0$ in a finite number of steps. If instead $f$ has norm $\lambda < 1$, then
\begin{align*}
    \lVert h_{n+1}-h_n\rVert = \lVert fh_{n}-fh_{n-1}\rVert
    \le \lambda \lVert h_{n}-h_{n-1}\rVert.
\end{align*}
Therefore, consecutive distances are uniformly bounded by $c\lambda^n$ for some $c$, hence, for $m \ge n$, $\lVert h_{m}-h_n\rVert \le \frac{c\lambda^n}{1-\lambda}$, thus ensuring convergence.

\begin{definition}
    \label{def:depthmachine}
    Let $f, \{h_i\}_{i\in \mathbb N}$ be as in~\cref{eq:machinesequence}.
    The {\em depth} of $f$ is the smallest integer $n$ (if it exists) such that
    \begin{equation*}
        h_{n+1} = h_n,
    \end{equation*}
    and $\infty$ otherwise.
\end{definition}

\subsection{Modularity and computability}

Under suitable \emph{independence} conditions, more complex machines can be created as a sum of simpler ones.
\begin{definition}
    \label{def:independence}
    Let $M$ be a topological vector space. Let $f, f' \colon M \rightarrow M$ be continuous endofunctions. We say that $f$ \emph{does not depend} on $f'$ if, for any topological space $X$, for any pair of continuous maps $b, b'\colon X \rightarrow M$, and for all $\lambda \in \R$, the following holds:
    \begin{equation}
        \label{eq:independence}
        f (b + \lambda f' b') = f b.
    \end{equation}
    Otherwise, we say that $f$ depends on $f'$.
\end{definition}
\begin{remark}
    Independence of $f$ from $f'$ is stronger than asking $f f' = 0$, because in general it is not true that $f (a + a') = f a + f a'$.
\end{remark}
\Cref{def:independence} is quite useful to compute stable states. For example, if $f$ does not depend on itself, then automatically $f$ is a machine, and $S_f = \id + f$; we call such machines {\em square-zero}.
Under suitable assumptions, machines can be juxtaposed to recover the notion of \textit{deep neural networks}.
\begin{theorem}
    \label{thm:sumofsmallmachines}
    Let $f, f'$ be machines such that $f$ does not depend on $f'$. Then $f+f'$ is also a machine, and $S_{f+f'} = S_{f'} S_{f}$. If furthermore $f'$ does not depend on $f$, then $S_{f+f'} = S_f + S_{f'} - \id$.
\end{theorem}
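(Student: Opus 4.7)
The plan is to invoke \Cref{thm:oneminusf} throughout, so that being a machine is synonymous with $\id$ minus the endomorphism being an isomorphism, with stable state $(\id - \cdot)^{-1}$. The first goal is therefore to exhibit $\id - (f+f')$ as a composition of the two known isomorphisms $\id - f$ and $\id - f'$.

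Concretely, I would compute $(\id - f)\circ(\id - f')$ by expanding using right-linearity of composition---the version that, as the excerpt warns, is the only one that always holds---to obtain
\begin{equation*}
(\id - f)\circ(\id - f') = (\id - f') - f\circ(\id - f').
\end{equation*}
Applying the independence hypothesis \cref{eq:independence} with $b = b' = \id$ and $\lambda = -1$ immediately yields $f\circ(\id - f') = f$, after which subtraction gives $(\id - f)(\id - f') = \id - (f+f')$. Since the left-hand side is a composition of isomorphisms, $\id - (f+f')$ is an isomorphism, so $f+f'$ is a machine by \Cref{thm:oneminusf}, and its inverse is $(\id - f')^{-1}(\id - f)^{-1} = S_{f'}S_f$; this establishes the first assertion.

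If furthermore $f'$ does not depend on $f$, the symmetric argument also yields $(\id - f')(\id - f) = \id - (f+f')$, so $S_f$ and $S_{f'}$ commute. To identify $S_{f+f'}$ with $S_f + S_{f'} - \id$, I would first use $S_f = \id + fS_f$ and $S_{f'} = \id + f'S_{f'}$ (read off from $(\id - f)S_f = \id$ and its primed analogue by right-linearity) to rewrite
\begin{equation*}
S_f + S_{f'} - \id = \id + fS_f + f'S_{f'},
\end{equation*}
and then verify that this is a right inverse of $\id - (f+f')$. Distributing the left factor via right-linearity reduces the check to evaluating $f\circ(\id + fS_f + f'S_{f'})$ and $f'\circ(\id + fS_f + f'S_{f'})$; the two independence hypotheses collapse these to $f\circ(\id + fS_f) = fS_f$ and $f'\circ(\id + f'S_{f'}) = f'S_{f'}$ respectively, after which the remaining terms cancel to $\id$.

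The step I expect to require the most care is this last verification, because composition is only linear in its left argument: I cannot distribute $f$ over the sum on its right without first invoking independence, and I must recognize $\id + fS_f$ as $S_f$ itself rather than trying to split $S_f - \id$ additively under an outer $f$. Once this is handled, uniqueness of the inverse of the isomorphism $\id - (f+f')$ forces $S_{f+f'} = S_f + S_{f'} - \id$.
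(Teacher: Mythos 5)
Your proof is correct and follows essentially the same route as the paper's: the identity $(\id - f)(\id - f') = \id - (f+f')$ obtained from left-slot linearity of composition plus the independence hypothesis, and then, for the mutually independent case, checking that $S_f + S_{f'} - \id = \id + fS_f + f'S_{f'}$ is absorbed correctly by $f$ and $f'$. The only cosmetic difference is that you phrase the final verification as exhibiting a right inverse of the isomorphism $\id - (f+f')$, whereas the paper phrases it as exhibiting a stable state with initial condition $\id$; by \Cref{thm:oneminusf} these are the same check.
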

\begin{proof}
    By~\cref{thm:oneminusf,eq:independence}, $f+f'$ is a machine:
    \begin{equation}
        \label{eq:compositionindependent}
        (\id - f)(\id - f') = (\id - f - f'),
    \end{equation}
    so $(\id - f - f')$ is an isomorphism (composition of isomorphisms). \Cref{eq:compositionindependent} also determines the stable state:
    \begin{equation*}
        S_{f+f'} = (\id - f - f')^{-1} = (\id - f')^{-1}(\id - f)^{-1} = S_{f'}S_f.
    \end{equation*}
    Moreover, if $f'$ does not depend on $f$, then
    \begin{align*}
        f(S_f + S_{f'} - \id) = f(S_f + f'S_{f'}) = f S_f,\\
        f'(S_f + S_{f'} - \id) = f'(fS_f + S_{f'}) = f' S_f'.
    \end{align*}
    Hence,
    \begin{equation*}
        S_f + S_{f'} - \id = \id + (f+f')(S_f + S_{f'} - \id).
    \end{equation*}
\end{proof}

\Cref{thm:sumofsmallmachines} allows us to build a broad class of networks from basic components.
Given a set of machines $\{f_1, \dots, f_n\}$, we can define its \emph{dependency graph} as follows: the set of vertices is $\{1, \dots, n\}$, and there is a directed edge from $i$ to $j$ (for $i \neq j$) if and only if $f_j$ depends on $f_i$.
If the dependency graph is acyclic, then $f_1+\dots+f_n$ is a machine, and there is an efficient procedure to compute its stable state. We will need some basic graph-theoretical notions to describe it.

\paragraph{Layering of acyclic directed graphs.} Given a finite directed graph $(V, E)$, a \emph{layering}~\cite{sugiyamaMethodsVisualUnderstanding1981} on $(V, E)$ of height~$k$ is a partition $\{V_1, \dots, V_k\}$ of its vertices such that, whenever we have an edge from $v_i\in V_i$ to $v_j\in V_j$, then necessarily $i<j$. A directed graph $(V, E)$ can only admit layerings if it is acyclic.
In that case, the height of a layering must be at least the length of the longest path in $(V, E)$ increased by one.
This lower bound is tight. Indeed, given a vertex $v\in V$, we can define its \emph{depth} $d(v)$ to be the length of the longest path terminating in $v$. A layering of minimal height can be defined as follows:
\begin{equation*}
    V_i = d^{-1}(i+1).
\end{equation*}

\begin{corollary}
    \label{cor:acyclicgraph}
    Let us consider a set of machines $\{f_1, \dots, f_n\}$, and let $(V, E)$ be its dependency graph. Let us assume that $(V, E)$ is acyclic, with layering $\{V_1, \dots, V_k\}$.
    If we denote $f = f_1+\dots+f_n$, then
    \begin{equation}
        \label{eq:computemachinegraph}
        S_f
        = \left(\id + \sum_{v \in V_k} \left(S_{f_v}-\id\right)\right) \dots \left(\id + \sum_{v \in V_1} \left(S_{f_v}-\id\right)\right).
    \end{equation}
\end{corollary}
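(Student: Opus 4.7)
I would aggregate the machines layer by layer. Set $g_i := \sum_{v\in V_i} f_v$ for $i=1,\dots,k$, so $f = g_1+\dots+g_k$. The corollary then reduces to two claims: (i) each $g_i$ is a machine with $S_{g_i} = \id + \sum_{v\in V_i}(S_{f_v}-\id)$, and (ii) $f$ is a machine with $S_f = S_{g_k}S_{g_{k-1}}\cdots S_{g_1}$. Combining (i) and (ii) recovers~\cref{eq:computemachinegraph}. Both claims follow from iterated use of~\cref{thm:sumofsmallmachines}, provided the relevant independence relations survive taking sums.

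\textbf{Closure of independence under sums.} The crucial preliminary step is: if $f$ does not depend on either of $f',f''$, then $f$ does not depend on $f'+f''$; and symmetrically, if neither of $f,f'$ depends on $g$, then $f+f'$ does not depend on $g$. The first of these is the subtle one, since~\cref{def:independence} need not a priori respect sums in the inner argument. I would prove it by two applications of~\cref{eq:independence}: replacing $b$ with $b+\lambda f''b'$, independence of $f$ from $f'$ yields $f(b+\lambda f''b'+\lambda f'b') = f(b+\lambda f''b')$; independence of $f$ from $f''$ then gives $f(b+\lambda f''b') = fb$. Since $(f'+f'')b' = f'b'+f''b'$, the conclusion follows. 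The symmetric claim is immediate from $(f+f')b = fb+f'b$.

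\textbf{Claims (i) and (ii).} For (i), an edge between two vertices of the same layer would violate the layering condition $i<i$, so for distinct $v,v'\in V_i$ the machines $f_v,f_{v'}$ are mutually independent. Enumerating $V_i=\{v_1,\dots,v_r\}$, an induction on $r$, using the closure lemma to inherit mutual independence of the partial sum $\sum_{j\le s}f_{v_j}$ and $f_{v_{s+1}}$ and then applying the second part of~\cref{thm:sumofsmallmachines}, yields the formula for $S_{g_i}$. For (ii), since edges of the dependency graph always go to strictly higher layers, $f_v$ does not depend on $f_w$ whenever $v\in V_i$, $w\in V_j$, $i<j$; by the closure lemma, $g_i$ does not depend on $g_{i+1}+\dots+g_k$. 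An induction on $k$, applying the first part of~\cref{thm:sumofsmallmachines} to the pair $(g_1,\,g_2+\dots+g_k)$ at each step, gives $S_f = S_{g_k}\cdots S_{g_1}$; substituting (i) produces~\cref{eq:computemachinegraph}. The main obstacle is the closure lemma, as it is the point where the one-sided linearity of composition emphasized in~\cref{sec:categoricalpreliminaries} could have caused trouble.
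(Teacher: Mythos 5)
Your proposal is correct and follows essentially the same route as the paper's proof: aggregate each layer into $l_i = \sum_{v\in V_i} f_v$ and apply \cref{thm:sumofsmallmachines} both within layers (second part, for the formula $S_{l_i} = \id + \sum_{v\in V_i}(S_{f_v}-\id)$) and across layers (first part, for $S_f = S_{l_k}\cdots S_{l_1}$). The only difference is that you explicitly state and prove the closure of independence under sums in each argument, a step the paper treats as immediate; your two-step verification via \cref{eq:independence} is correct and is indeed the point where the one-sided linearity of composition needs to be checked.
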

\begin{proof}
    We can define a new set of machines $\{l_1, \dots, l_k\}$, where
    \begin{equation*}
        l_i = \sum_{v \in V_i} f_v.
    \end{equation*}
    For $i < j$, $l_i$ does not depend on $l_j$, so \cref{eq:computemachinegraph} follows trivially from~\cref{thm:sumofsmallmachines}:
    \begin{equation*}
        S_f
        = S_{l_k}\dots S_{l_1}
        = \left(\id + \sum_{v \in V_k} \left(S_{f_v}-\id\right)\right) \dots \left(\id + \sum_{v \in V_1} \left(S_{f_v}-\id\right)\right).
    \end{equation*}
\end{proof}

\Cref{cor:acyclicgraph} establishes a clear link between sums of independent machines and compositions of layers in classical feedforward neural networks. Even though, in general, we are not limited to sequential architectures (see~\cref{fig:hypergraphs}), the layering procedure determines the order in which machines should be concatenated.

\section{Finite and infinite depth}
\label{sec:discretecontinuousneuralnetworks}

Neural networks can be seen as a sum of independent square-zero machines, one per layer.
We first use our machine-based framework to design finite-depth architectures using directed hypergraphs. This allows for shortcut connections~\cite{bishopNeuralNetworksPattern1995,ripleyPatternRecognitionNeural1996}, as in, for instance, residual learning networks~\cite{heDeepResidualLearning2016}, as well as more complex connectivities, such as UNet~\cite{liHDenseUNetHybridDensely2018}.

Analogously, ODEs correspond to a sum of independent contracting machines, obtained by splitting the time interval into small sub-intervals.
This is a standard strategy to obtain existence and uniqueness results for ODEs, which are a consequence of the Caccioppoli-Banach principle~\cite[Chapt.~XVI]{kantorovichFunctionalAnalysis1982}---contractions in a complete metric space admit a unique fixed point.
As described in~\cref{sec:convergence_and_depth}, unlike square-zero machines, which have depth $1$, contracting machines can in general have infinite depth.
We describe {\em Volterra machines}, a generalization of neural ODEs~\cite{chenNeuralOrdinaryDifferential2018} in our framework, as an example of an infinite-depth machine.

\subsection{Hypergraph machines}
\label{sec:hypergraphmachines}

\begin{figure}
     \begin{tikzpicture}[scale=0.8, every node/.style={scale=0.8}]
       \node[vertex,label=above:\(v_1\)] (v1) {};
       \node[vertex,below right of=v1,label=above:\(v_3\)] (v3) {};
       \node[vertex,below left of=v3,label=above:\(v_2\)] (v2) {};
       \node[vertex,right of=v3,label=above:\(v_4\)] (v4) {};
       \node[vertex,right of=v4,label=above:\(v_5\)] (v5) {};
       \node[vertex,above right of=v4,label=above:\(v_6\)] (v6) {};
       \node[vertex,below right of=v4,label=above:\(v_7\)] (v7) {};
       \node[vertex,right of=v5,label=above:\(v_8\)] (v8) {};

       \begin{pgfonlayer}{background}
           \draw[edge,opacity=.3,color=Dandelion] (v1) -- (v2) -- (v3) -- (v1);
           \draw[edge,opacity=.3,color=red, line width=38pt] (v3) -- (v4);
           \draw[edge,opacity=.3,color=LimeGreen] (v4) -- (v6) -- (v5) -- (v7) -- (v4);
           \draw[edge,opacity=.3,color=blue, line width=38pt] (v5) -- (v8);
           \draw[edge,opacity=.3,color=cyan, line width=38pt] (v1) -- (v6);
       \end{pgfonlayer}

       \node[elabel,opacity=.3,color=Dandelion,label=right:\(E_1: \{v_1\mbox{,}v_2\}\rightarrow \{v_3\}\)]  (e1) at (-7.5,0.4) {};
       \node[elabel,opacity=.3,below of=e1,color=red,label=right:\(E_2: \{v_3\}\rightarrow \{v_4\}\)]  (e2) {};
       \node[elabel,opacity=.3,below of=e2,color=cyan,label=right:\(E_3: \{v_1\}\rightarrow \{v_6\}\)]  (e3) {};
       \node[elabel,opacity=.3,below of=e3,color=LimeGreen,label=right:\(E_4: \{v_4\}\rightarrow\{v_6\mbox{,}v_5\mbox{,}v_7\}\)]  (e4) {};
       \node[elabel,opacity=.3,below of=e4,color=blue,label=right:\(E_5: \{v_5\}\rightarrow \{v_8\}\)]  (e6) {};
       \end{tikzpicture}

     \caption{\textbf{Hypergraph representation of a neural network.} Given layers $\{l_1, \dots, l_5\}$, the representation corresponds to the neural network mapping $(x_1,\; x_2,\; x_3,\; x_4,\; \dots, x_8)$ to $(x_1,\; x_2,\; l_1(x_1, x_2) + x_3,\; l_2(l_1(x_1, x_2) + x_3) + x_4,\; \dots,\; l_5(l_4(l_2(l_1(x_1, x_2) + x_3) + x_4) + x_5) + x_8 )$.}
     \label{fig:pruning}
\end{figure}
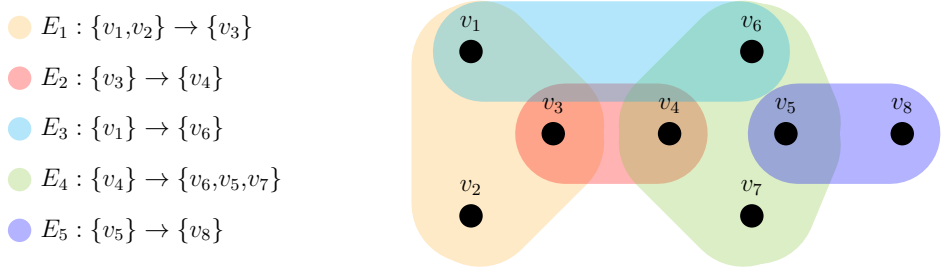

We will need some basic notions concerning directed hypergraphs from~\cite{galloDirectedHypergraphsApplications1993}.

\begin{definition}\label{def:hypergraph}\cite[Sect. 2]{galloDirectedHypergraphsApplications1993}
    Let $\P: \mathbf{Set} \to \mathbf{Set}$ denote the power set functor. A \textit{directed hypergraph} is a pair of finite sets $(\V, \E)$ of vertices and hyperedges, with $\E\subseteq \P\V\times\P\V$, that is to say each hyperedge $E$ can have several source vertices (or none) and several target vertices (or none). We denote the subset of source vertices and target vertices $s(E)$ and $t(E)$ respectively. In the remainder of this work, directed hypergraphs will simply be called hypergraphs.
\end{definition}

Even though~\cite{galloDirectedHypergraphsApplications1993} requires hyperedges to have disjoint source and target, we drop this condition. The notion of acyclic hypergraph is identical as hyperedges with overlapping source and target are cycles of length $1$.

\begin{definition}\label{def:hypergraphpath}\cite[Sect. 3]{galloDirectedHypergraphsApplications1993}
    Given a hypergraph $(\V, \E)$, a \emph{path} $P_{ab}$ of length $q$ is a sequence $v_1=a,E_1,v_2,E_2,\dots,E_q,v_{q+1}=b$, where:
    \begin{equation*}
        a\in s(E_1),\quad
        v_j\in t(E_{j-1})\cap s(E_j),\; j \in \range{2}{q},
        \quad\text{ and }\quad
        b\in t(E_q).
    \end{equation*}
    $P_{ab}$ is a \emph{cycle} if $b\in s(E_1)$. A hypergraph is \emph{acyclic} if it has no cycles.
\end{definition}
\begin{definition}
    The \emph{line} graph of a directed hypergraph $H = (\V, \E)$ is a directed graph having as nodes the set $\E$ of hyperedges of $H$. $E_1$ is connected to $E_2$ if and only if $t(E_1) \cap s(E_2) \neq \emptyset$.
\end{definition}

Let $(\V, \E)$ be an acyclic hypergraph. A \emph{nonlinear hypergraph representation} is, for each vertex $v\in\V$, a topological vector space $M_v$, and, for each hyperedge $E \in \E$, a continuous map:
\begin{equation*}
  \begin{tikzcd}
    \bigoplus_{v \in s(E)} M_v \arrow{r}{p_E}
    & \bigoplus_{v \in t(E)} M_v.
  \end{tikzcd}
\end{equation*}
Let $M := \bigoplus_{v\in\V} M_v$. Then $p_E$ can be extended to a machine on $M$:
\begin{equation*}
\begin{tikzcd}
    M \arrow{r}{}
    & \bigoplus_{v \in s(E)} M_v \arrow{r}{p_E}
    & \bigoplus_{v \in t(E)} M_v \arrow{r}{}
    & M
\end{tikzcd}
\end{equation*}
The dependency graph for $\{p_E\}_{E\in\E}$ is a subgraph of the line graph of $(\V, \E)$, and is therefore also acyclic, hence the endomorphism $\sum_{E\in\E} p_E$ is a machine.

\begin{figure}
      \subfloat[][\textbf{Prunable hypergraph, initialization.}]{\includegraphics[width=.45\textwidth]{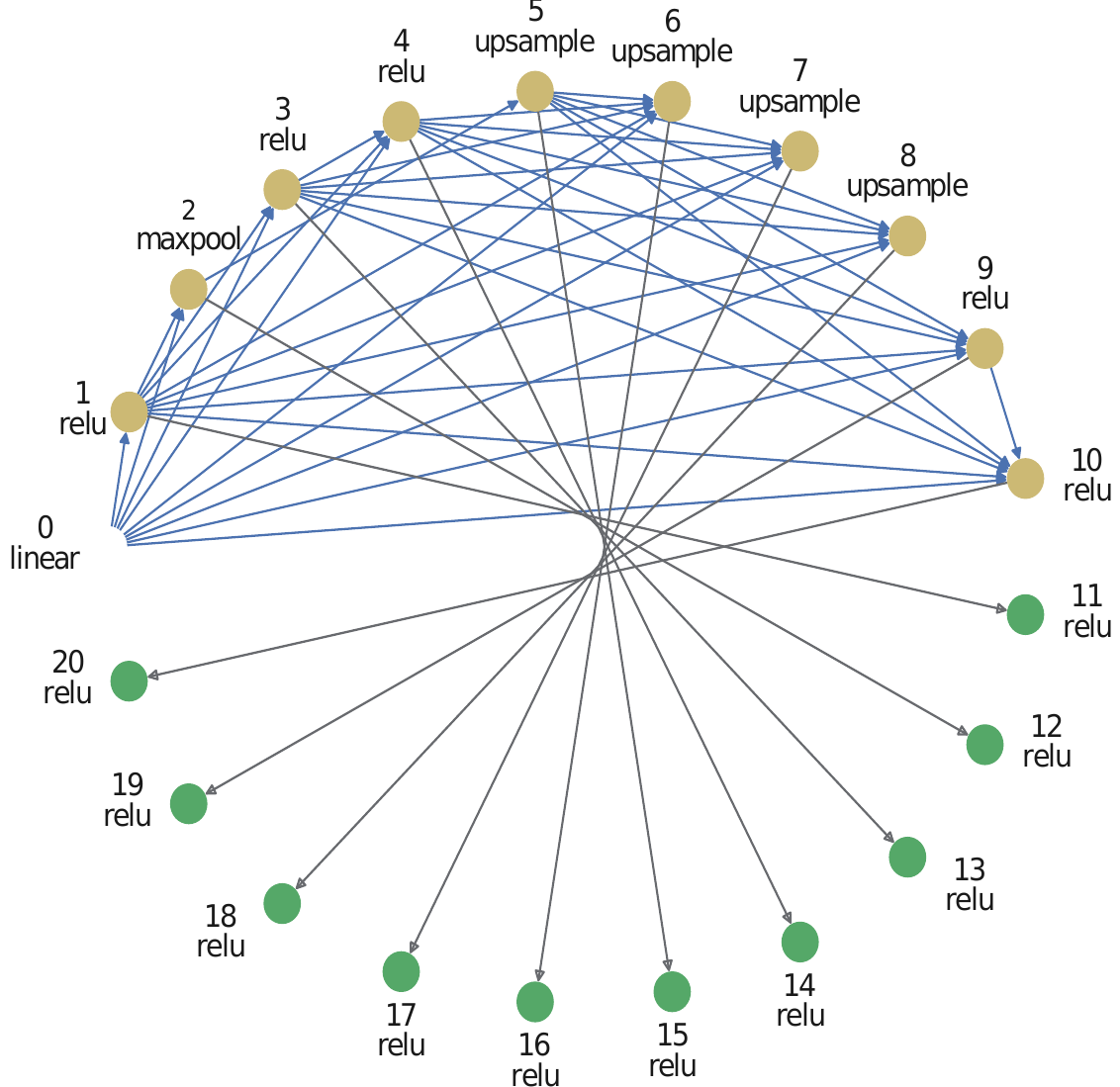}\label{fig:pruning_start}}
     \qquad
     \subfloat[][\textbf{Trained hypergraph, MNIST.}]{\includegraphics[width=.45\textwidth]{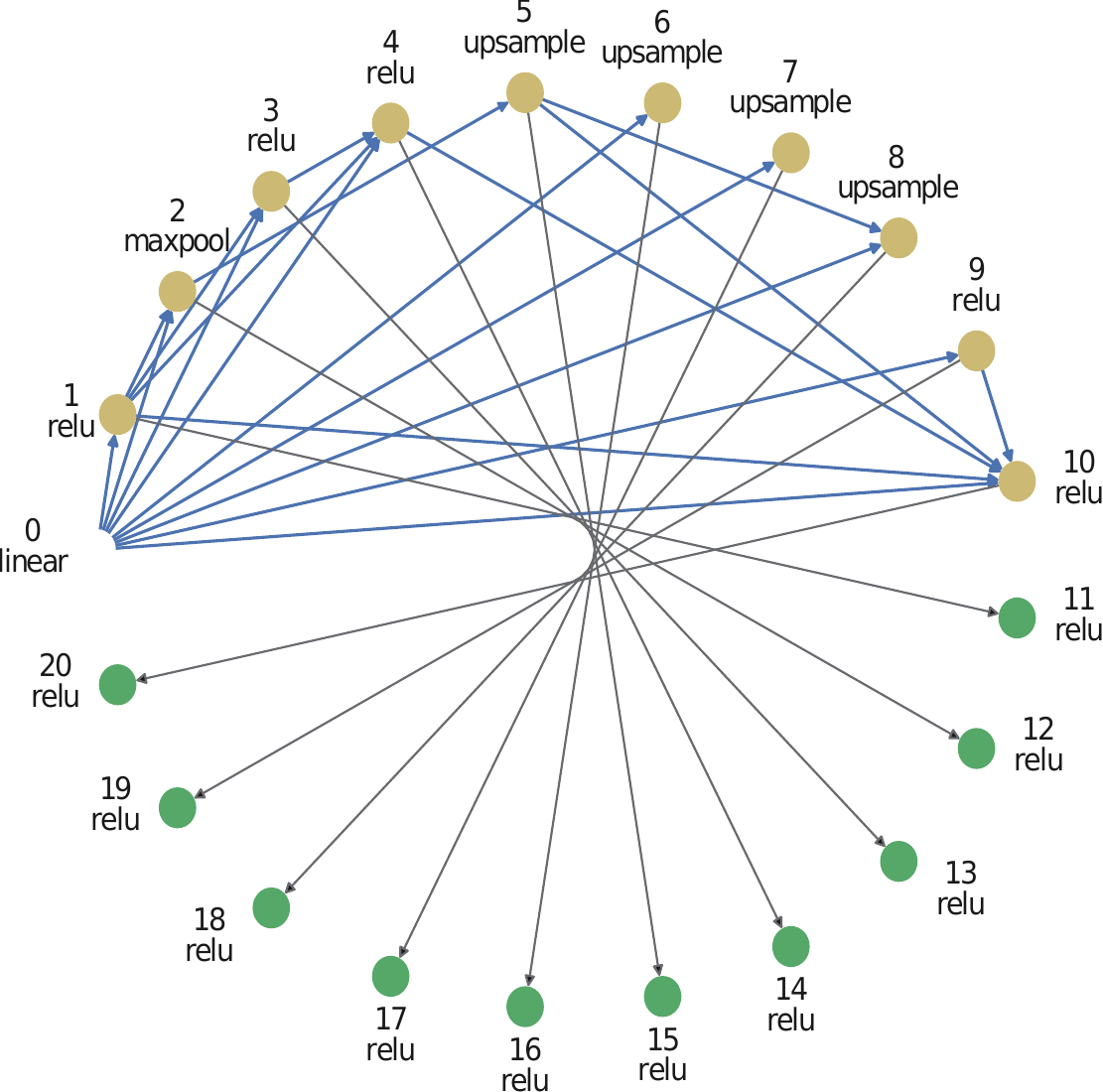}\label{fig:pruning_end}}

     \subfloat[][\textbf{Learned convolutional architecture.}]{\includegraphics[width=\textwidth]{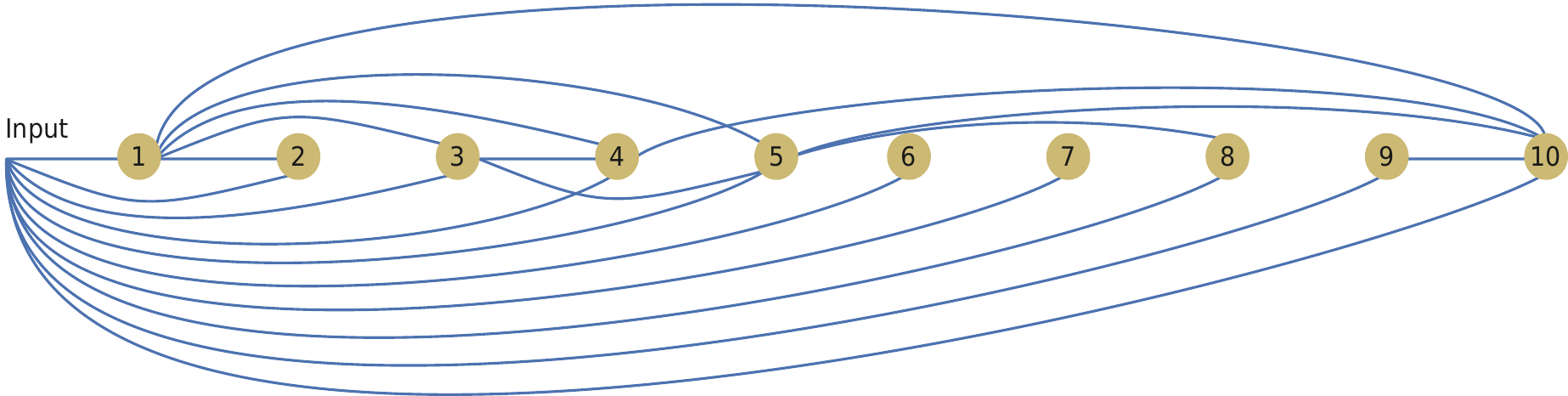}\label{fig:pruning_conv}}
     \caption{As a starting architecture we consider a directed acyclic graph whose nodes are activation functions (identity, ReLU, upsampling and max-pooling). Blue directed edges are convolutions and black directed edges are linear layers. In~\protect\subref{fig:pruning_start} we show the starting connectivity, which is maximal with respect to the blue edges which connect all admissible (i.e.~same dimensionality) activation nodes.
     During training, we prune those edges whose weights have sufficiently small Euclidean norm. \protect\subref{fig:pruning_end} Architecture after pruning during training on the MNIST dataset (with accuracy $\approx 98.6\%$). \protect\subref{fig:pruning_conv} The learned convolutional architecture.}
     \label{fig:hypergraphs}
\end{figure}

\paragraph{Hypergraph networks sculpting.}
Using the graph-theoretical ideas developed so far, we devised a first architecture sculpting algorithm that requires minimal user inputs and fine-tuning. We start with a finite number of nodes, each equipped with an activation function on a given space with a group of symmetries (i.e., translations for convolution, identity for fully-connected layers). Each node is connected to all preceding nodes with compatible dimensionality and has a unique fully-connected output. When reaching a node, the outputs of its incoming edges are summed. During training, we add to the loss function a cost proportional to the sum of the Euclidean norms of the weights associated with each edge. In \cref{fig:pruning_start} we show this construction for a translation-equivariant architecture used to classify the MNIST dataset~\cite{deng2012mnist}, where we start with $10$ nodes equipped with activation functions compatible with an image analysis task, connected by convolutional edges with a fixed number of channels. During training, we prune edges whose associated weights have Euclidean norm smaller than a fixed tolerance ($10^{-6}$), see~\cref{fig:pruning_end}. This small tolerance value has minimal impact on the accuracy of the model while reducing its computational cost. In~\cref{fig:pruning_conv}, we observe that the learned convolutional architecture has non-trivial connectivity. The achieved accuracy on the MNIST test set~($\approx 98.6\%$) is below state of the art. However, this particular algorithm does not require any manual fine-tuning, other than the choice of equivariance and number and dimension of nodes, which could be chosen automatically according to the computational power of the user's machine. A PyTorch implementation of this algorithm is available at \url{https://github.com/LimenResearch/hypergraph_machines}.

\subsection{Volterra machines}
\label{sec:VolterraMachines}

A natural generalization of neural ODEs in our framework is given by {\em Volterra machines}. The nonlinear {\em Volterra equation of the second kind} is, in its classical form:
\begin{equation}
    \label{eq:Volterra}
    u(t) = \psi(t) + \int_{t_0}^t \phi(t, s, u(s))ds,
    \text{ for all } t \in [t_0, T],
\end{equation}
where $t_0 < T \in \R$.
This equation generalizes ordinary differential equations. Whenever $\phi$ only depends on the last two arguments, i.e. $\phi(t, s, v) = \phi(s, v)$, and $\psi(t) = \psi(t_0)$ for all $t \in [t_0, T]$, then the solution $u$ of the Volterra equation (if it exists) also solves the initial value problem:
\begin{equation*}
    \frac{du(t)}{dt} = \phi(t, u(t))
    \text{ and } u(t_0) = \psi(t_0).
\end{equation*}

We consider the vector-valued case, where the codomain of $\phi, \psi$ (and consequently $u$) is the finite-dimensional Hilbert space $\R^n$, equipped with the standard scalar product.
Let $\Lspace$ be the Hilbert space of square-integrable functions from the interval $[t_0, T]$ to $\R^n$. We deviate slightly from the more standard set of assumptions (see~\cite{bakerPerspectiveNumericalTreatment2000}) to ensure existence and uniqueness of solutions, as we do not ask that $\psi$ is continuous:
\begin{enumerate}
    \item $\psi \in \Lspace$.
    \item $\phi(t, s, v)$ is continuous for $t_0 \le s \le t \le T$.
    \item $\phi(t, s, v)$ satisfies a uniform Lipschitz condition in $v$ for $t_0 \le s \le t \le T$. That is to say, there exists $\lambda \in \R$ such that, for all $v, \tilde v\in\R^n$,
    \begin{equation}
        \label{eq:Lipschitz}
        \lVert \phi(t, s, v) - \phi(t, s, \tilde{v})\rVert \le
        \lambda \lVert v-\tilde{v}\rVert.
    \end{equation}
\end{enumerate}
We show existence and uniqueness of solutions for square-integrable functions in the machine framework.
\begin{definition}
    \label{def:smallvolterramachine}
    Let $\phi(t, s, v)$ be a continuous function on $t_0 \le s \le t \le T$ and $v \in \R^n$, with values in $\R^n$. If $\phi(t, s, v)$ satisfies a uniform Lipschitz condition in $v$ for $t_0 \le s \le t \le T$, we say that $\phi$ is a {\em Volterra machine} on $\Lspace$.
\end{definition}
A Volterra machine $\phi$ is a machine on $\Lspace$.
Let
\[
f\colon \Lspace \rightarrow \Lspace
\]
be the nonlinear endofunction given by:
\begin{equation*}
    f(u) = t \mapsto \int_{t_0}^t \phi(t, s, u(s))ds.
\end{equation*}
Let $\lambda$ be such that~\cref{eq:Lipschitz} holds. Let us choose a positive integer $N$ such that
\begin{equation}
    \label{eq:largeN}
    N > \lambda^2(T-t_0)^2.
\end{equation}
For $i \in \range{0}{N}$, let $t_i = t_0 + \frac{i}{N}(T-t_0)$. For $i \in \range{1}{N}$, we can define
\begin{equation*}
    f_i(u) =
        t \mapsto
        \int_{t_0}^t \phi(t, s, u(s))\mathbbm{1}_{[t_{i-1}, t_i]}(s)ds.
\end{equation*}
Clearly $f = f_1+\dots + f_N$. Furthermore, for $i < j$, $f_i$ does not depend on $f_j$. We need to show that $f_i$ is a contraction. Then, given $u, \tilde{u} \in \Lspace$, we have:
\begin{align*}
    \lVert f_i(u) - f_i(\tilde{u})\rVert_2^2
    &= \int_{t_0}^T \left\Vert\int_{t_0}^t \mathbbm{1}_{[t_{i-1}, t_i]}(s)[\phi(t, s, u(s)) -
    \phi(t, s, \tilde{u}(s)] ds \right\Vert_2^2 dt\\
    &\le \int_{t_0}^T \frac{T-t_0}{N}\int_{t_0}^t
    \left\Vert\phi(t, s, u(s)) - \phi(t, s, \tilde{u}(s))\right\Vert_2^2 dsdt\\
    &\le \int_{t_0}^T \frac{T-t_0}{N} \int_{t_0}^t
    \lambda^2\left\Vert u(s) - \tilde{u}(s)\right\Vert_2^2 ds dt\\
    &\le \int_{t_0}^T \lambda^2\frac{T-t_0}{N}
    \left\Vert u - \tilde{u}\right\Vert_2^2 dt \\
    &= \lambda^2\frac{\left(T-t_0\right)^2}{N}
    \left\Vert u - \tilde{u}\right\Vert_2^2
\end{align*}
Therefore, by~\cref{eq:largeN}, $f_i$ is a contraction.
As $f$ is a sum of machines with an acyclic dependency graph, it is also a machine on $\Lspace$ by~\cref{cor:acyclicgraph}. In particular, given a sequence $\{\psi_n\}_{n\in\mathbb N} \to \psi_{\infty}$ of square-integrable functions that converges in norm $L^2$ to $\psi_\infty$, for all $n\in\mathbb N \cup \infty$ there is a unique $u_n$ such that
\begin{equation*}
    u_n(t) = \psi_n(t) + \int_{t_0}^t \phi(t, s, u_n(s))ds,
    \text{ for all } t \in [t_0, T],
\end{equation*}
and the sequence $\{u_n\}_{n\in\mathbb N}$ converges in norm $L^2$ to $u_\infty$.

\subsubsection{Efficient Volterra machines}
\label{sec:efficientvolterramachines}

Nonlinear Volterra integral equations are in general harder to solve than ordinary differential equations (see~\cite{bakerPerspectiveNumericalTreatment2000} for a review of possible methods).
This is particularly problematic here, as we wish to solve a Volterra equation in a time comparable with the forward pass of a neural ODE.
Luckily, some special cases of Volterra equations admit a simpler solution in terms of a system of ODEs~\cite{bowndsTheoryPerformanceSubroutine1982}.
Let $U, V, W$  be finite real vector spaces equipped with a bilinear map
$B\colon U \otimes V \rightarrow W$.
Let $\phi_1, \dots, \phi_m$ be $U$-valued functions, and $c_1, \dots, c_m$ $V$-valued functions. We can consider:
\begin{equation*}
    \phi(t, s, v) = \sum_{j=1}^m B(\phi_j(s, v), c_j(t)).
\end{equation*}
Analogously to a result presented in~\cite{bowndsTheoryPerformanceSubroutine1982}, we can solve the corresponding Volterra equation as a system of ODEs.
\begin{theorem}\cite[Thm.~3]{bowndsTheoryPerformanceSubroutine1982}
    \label{thm:fastvolterra}
    Let $\psi \in \Lspace$. Let
    \begin{equation*}
        \phi(t, s, v) = \sum_{j=1}^m B(\phi_j(s, v), c_j(t)).
    \end{equation*}
    Let $\range{z_1}{z_m}$ be the solution to the following system of ODEs:
    \begin{equation}
        \label{eq:odesystem}
        \frac{dz_j(t)}{dt} = \phi_j(t, u(t))\text{ for all } t \in [t_0, T],
        \text{ with } z_j(t_0) = 0,
    \end{equation}
    where
    \begin{equation*}
        u(t) = \psi(t) + \sum_{j=1}^m B(z_j(t), c_j(t)).
    \end{equation*}
    Then, $u, \phi, \psi$ respect \cref{eq:Volterra}.
\end{theorem}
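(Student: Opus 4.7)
The plan is to reduce the statement to a direct verification by integrating the ODE system and using the bilinearity of $B$. The only nontrivial input is linearity of $B$ in its first slot, which lets me pull the map $x \mapsto B(x, c_j(t))$ across an integral in $s$, since $c_j(t)$ does not depend on the integration variable.

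First I would apply the fundamental theorem of calculus to each ODE in~\cref{eq:odesystem}: from $\frac{dz_j(t)}{dt} = \phi_j(t, u(t))$ together with $z_j(t_0) = 0$, I obtain $z_j(t) = \int_{t_0}^t \phi_j(s, u(s))ds$. This step only requires continuity of $s \mapsto \phi_j(s, u(s))$, which follows from the hypotheses inherited from \cref{def:smallvolterramachine} together with $u$ being produced by the coupled system and hence at least continuous.

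Next I would substitute this expression into the defining identity $u(t) = \psi(t) + \sum_{j=1}^m B(z_j(t), c_j(t))$. Linearity of $B$ in the first argument allows me to move $B(\cdot, c_j(t))$ inside the integral, and swapping the finite sum with the integral collects all the terms into $\sum_j B(\phi_j(s, u(s)), c_j(t)) = \phi(t, s, u(s))$. What remains is exactly $u(t) = \psi(t) + \int_{t_0}^t \phi(t, s, u(s))ds$, which is \cref{eq:Volterra}.

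The main ``obstacle'' is conceptual rather than technical: the key insight is that introducing the auxiliary integrals $z_j$ separates the $t$-dependence carried by $c_j(t)$---which cannot enter an integral in $s$---from the $s$-dependence inside $\phi_j$, so the global integral equation becomes a pointwise ODE coupled only through $u$. Existence of the $z_j$ is assumed in the statement; since the coupling $u(t) = \psi(t) + \sum_j B(z_j(t), c_j(t))$ is algebraic and the Lipschitz hypothesis on each $\phi_j$ transfers to the system, classical Picard-Lindel\"of would deliver existence a posteriori, but the proof itself does not need to invoke it.
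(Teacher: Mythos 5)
Your proposal is correct and follows essentially the same route as the paper's proof: integrate~\cref{eq:odesystem} to get $z_j(t) = \int_{t_0}^t \phi_j(s, u(s))\,ds$, substitute into the expression for $u$, and use linearity of $B$ in its first argument to pull $B(\anon, c_j(t))$ inside the integral and collect the sum into $\phi(t,s,u(s))$. The additional remarks you make about continuity and a posteriori existence via Picard--Lindel\"of are sensible but not needed, exactly as you note.
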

\begin{proof}
    Integrating~\cref{eq:odesystem}, we obtain
    \begin{equation*}
        z_j(t) = \int_{t_0}^t \phi_j(s, u(s))ds.
    \end{equation*}
    Therefore:
    \begin{align*}
        u(t)
        &= \psi(t) + \sum_{j=1}^m B(z_j(t), c_j(t))\\
        &= \psi(t) + \sum_{j=1}^m \int_{t_0}^t B(\phi_j(s, u(s)), c_j(t))ds\\
        &= \psi(t) + \int_{t_0}^t\sum_{j=1}^m B(\phi_j(s, u(s)), c_j(t))ds\\
        &= \psi(t) + \int_{t_0}^t \phi(t, s, u(s))ds.
    \end{align*}
\end{proof}
This can be seen as a continuous analog of neural architecture search. Given a family of Neural ODEs $\range{\phi_1}{\phi_m}$, and functions $\range{c_1}{c_m}$, we can compute a loss function with respect to the Volterra machine $$\sum_{j=1}^m B(\phi_j(s, u(s)), c_j(t)).$$ From this perspective, the relative strengths of $c_j(t)$ can be interpreted as {\em routing}. We will give an application of Volterra machines in~\cref{sec:continuouskernelmachines}, in the context of kernel methods.

\section{Kernel machines}
\label{sec:kernelmachines}

We are interested in combining kernel methods~\cite{scholkopfLearningKernelsSupport2002} with the machine framework. In their simplest form, kernel methods associate to an input space $X$ a Hilbert space $H$ of real-valued functions defined on $X$. Here, however, we are interested in studying Hilbert spaces of endofunctions of $X$. To do so, we will need some notions from the theory of {\em operator-valued} kernel methods~\cite{alvarezKernelsVectorValuedFunctions2012,kadri2016operator,micchelli2005learning}.

\subsection{Operator-valued kernels}

Let $X$ be a space, and $Y$ a Hilbert space, with scalar product $\langle \anon, \anon \rangle$. We are interested in studying functions $X \rightarrow Y$. In the remainder, we will denote the set of functions from a space $X$ to another space $Y$ by $Y^X$.
Let $\L(Y)$ be the space of bounded linear endomorphisms of $Y$. It is a Banach space, with norm given by the operator norm.

\begin{definition}\cite[Def.~3]{kadri2016operator}
    \label{def:operatorkernel}
    Let $Y$ be a Hilbert space. A map $K\colon X \times X \rightarrow \L(Y)$ is an {\em operator-valued kernel} if the following conditions are satisfied.
    \begin{enumerate}
        \item For all $x_1, x_2\in X$ the operator $K(x_1, x_2)\colon Y \rightarrow Y$ is self-adjoint.
        \item For all $x_1, \dots, x_n\in X$, $c_1,\dots, c_n \in Y$, the matrix
        \begin{equation*}
            M_{i, j} = \langle c_i, K(x_i, x_j)c_j\rangle
        \end{equation*}
        is positive-semidefinite.
    \end{enumerate}
\end{definition}

\begin{remark}
    A scalar kernel on $X$ can always be seen as an operator-valued kernel $K\colon X \times X \rightarrow \L(Y)$, where for all $x_1, x_2\in X$, $K(x_1, x_2)$ is a multiple of the identity.
\end{remark}

An operator-valued kernel $K\colon X \times X \rightarrow \L(Y)$ will induce a feature map $X \rightarrow H \subseteq Y^X$, where $H$ is the {\em Reproducing Kernel Hilbert Space} (RKHS~\cite{aronszajnTheoryReproducingKernels1950}) associated to $K$.
In particular, $H$ is a space of $Y$-valued functions on $X$. Every function in $H$ can be written as a sum:
\begin{equation*}
    f(x) = \sum_{j=1}^{\infty} K(x, x_j) c_j,
\end{equation*}
where, for every $j$, $x_j \in X$ and $c_j \in Y$.
Even though the above sum has infinite elements, this is never a problem in practice.
Given a function $f\in H$ and a finite dataset $\range{x_1}{x_m}$, one can always find $\range{c_1}{c_m}$ such that, for all $x \in \range{x_1}{x_m}$,
\begin{equation*}
    f(x) = \sum_{j=1}^{m} K(x, x_j) c_j.
\end{equation*}
In general machine learning problems, the function $\sum_{j=1}^{m} K(\anon, x_j) c_j$ is preferable to $f$ as, even though they are indistinguishable on the training dataset, we have
\begin{equation*}
    \left\Vert\sum_{j=1}^{m} K(\anon, x_j) c_j\right\Vert_H \le \left\Vert f\right\Vert_H,
\end{equation*}
and hence $\sum_{j=1}^{m} K(\anon, x_j) c_j$ tends to be smoother and better behaved.

As $H$ is a space of functions from $X$ to $Y$, we have a canonical map $H \times X \rightarrow Y$, given by function evaluation. In what follows, we will focus on the case $X = Y$.

\begin{definition}
    \label{def:kernelmachine}
    Let $X$ be a Hilbert space. Let $K\colon X\times X \rightarrow \L(X)$ be an operator-valued kernel, with RKHS $H$. $K$ is a {\em kernel machine} if the canonical map
    \begin{equation*}
        H \times X \rightarrow X
    \end{equation*}
    is a parametric machine.
\end{definition}

\Cref{def:kernelmachine} implies that for all $f \in H$, the function $f$ is a machine on $X$. Furthermore, one can use standard techniques from kernel methods to learn a function $f \in H$ whose associated stable state optimizes some relevant quantity.
In the case of kernel machines, an analog of the representer theorem~\cite{kimeldorf1971some} holds.
\begin{theorem}
  \label{thm:representer}
    Let us consider a finite set $S = \range{s_1}{s_m}$, a map $g\colon S \rightarrow X$, and a function $\Lambda\colon X^m\times \R \rightarrow \R$ strictly increasing in the last variable. Any solution to the optimization problem
    \begin{equation}
        \label{eq:kernelmachineobjective}
        \min_{f\in H} \Lambda(h(s_1), \dots, h(s_m), \lVert f \rVert_H),
    \end{equation}
    where $h$ is the stable state of $f$ with initial condition $g$, is of the form
    \begin{equation*}
        f(x) = \sum_{j=1}^{m} K(x, h(s_j)) c_j.
    \end{equation*}
\end{theorem}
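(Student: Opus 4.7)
The plan is to follow the classical representer theorem argument, exploiting the operator-valued reproducing property together with the fact that, since $K$ is a kernel machine, every $f \in H$ satisfies the hypothesis of~\cref{thm:oneminusf}.

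Fix a minimizer $f^* \in H$ of~\cref{eq:kernelmachineobjective} and let $h^*$ denote its stable state with initial condition $g$. Define $V \subseteq H$ to be the closed linear span of $\{K(\anon, h^*(s_j))c : c \in X, \; j \in \range{1}{m}\}$, and decompose orthogonally $f^* = f^*_V + f^*_\perp$ with $f^*_V \in V$ and $f^*_\perp \in V^\perp$. Applying the operator-valued reproducing identity $\langle \varphi, K(\anon, x)c\rangle_H = \langle \varphi(x), c\rangle_X$ with $\varphi = f^*_\perp$, $x = h^*(s_j)$, and $c$ ranging over $X$, forces $f^*_\perp(h^*(s_j)) = 0$ for every $j \in \range{1}{m}$.

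The consistency check I then perform is that replacing $f^*$ by $f^*_V$ does not alter the values $h^*(s_1), \dots, h^*(s_m)$ that enter $\Lambda$. Since $K$ is a kernel machine, $f^*_V \in H$ is itself a machine, so by~\cref{thm:oneminusf} the endomorphism $\id - f^*_V$ is an isomorphism of $X$. Let $\tilde h$ be the stable state of $f^*_V$ with initial condition $g$. Evaluating $h^* = g + f^* h^*$ at $s_j$ and using $f^*_\perp(h^*(s_j)) = 0$, both $h^*(s_j)$ and $\tilde h(s_j)$ solve $x = g(s_j) + f^*_V(x)$ in $X$; the isomorphism property forces $\tilde h(s_j) = h^*(s_j)$ for every $j$.

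By Pythagoras, $\lVert f^*_V \rVert_H \le \lVert f^*\rVert_H$ with strict inequality unless $f^*_\perp = 0$. Since $\Lambda$ is strictly increasing in its last argument and the first $m$ arguments are unaffected by the replacement above, optimality of $f^*$ forces $f^*_\perp = 0$, i.e., $f^* \in V$. The step I expect to be the main obstacle is producing a genuine finite-sum representative $f^* = \sum_{j=1}^m K(\anon, h^*(s_j)) c_j$ with $c_j \in X$, as opposed to a limit of such sums: this is immediate when $X$ is finite-dimensional (for instance in the Volterra setting $X = \R^n$, which is the case of interest), because the image of the bounded linear map $X^m \to H$, $(c_1, \dots, c_m) \mapsto \sum_j K(\anon, h^*(s_j)) c_j$, is then finite-dimensional, hence closed, and therefore coincides with $V$.
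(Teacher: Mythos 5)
Your argument is correct and follows essentially the same route as the paper's proof: project $f$ onto the span of the kernel sections at the points $h(s_j)$, use the reproducing property to see that the stable-state values on $S$ are unchanged (the paper verifies $h(s_j) = g(s_j) + \tilde f(h(s_j))$ directly, you phrase the same fact via uniqueness of the fixed point of $\id - f^*_V$), and conclude by strict monotonicity of $\Lambda$ in $\lVert f\rVert_H$. The one place you go beyond the paper is in asking whether the orthogonal projection genuinely lands in the set of finite sums $\sum_{j} K(\anon, h(s_j))c_j$ rather than merely in its closure; the paper takes this for granted, and your observation that it holds when $X$ is finite-dimensional (the image of the bounded map $X^m \rightarrow H$ is then finite-dimensional, hence closed) is a legitimate refinement of a point the paper's proof leaves implicit.
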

\begin{proof}
    Let us consider one solution $f$. Let $h$ be its stable state with initial condition $g$, and let $x_j = h(s_j)$, for $j\in\range{1}{m}$. Let $\tilde f$ be the projection of $f$ on the subspace:
    \begin{equation*}
        \left\{K(\anon, x_1) c_1 + \dots +  K(\anon, x_m) c_m \,|\, c_1, \dots, c_m \in X\right\}.
    \end{equation*}
    We start by observing that, for each $j\in\{1, \dots, m\}$, $\tilde f(x_j) = f(x_j)$. $h$ is the stable state of $\tilde f$ with initial condition $g$, as, for every $j\in\{ 1, \dots, m\}$,
    \begin{equation*}
        h(s_j) = g(s_j) + f(h(s_j)) = g(s_j) + \tilde f(h(s_j)).
    \end{equation*}
    As a consequence, $\tilde f$ produces a value smaller or equal than $f$ in~\cref{eq:kernelmachineobjective}, with equality if and only if they have the same norm, that is to say
    \begin{equation*}
        f \in \left\{K(\anon, x_1) c_1 + \dots +  K(\anon, x_m) c_m \,|\, c_1, \dots, c_m \in X\right\}.
    \end{equation*}
\end{proof}

In the context of kernel machines and for very small datasets, \cref{thm:representer} can be applied directly, guaranteeing optimality. In practice, for medium or large datasets, standard downsampling techniques, such as Nystr\"om sampling~\cite{drineasNystromMethodApproximating2005}, could be applied to replace $S = \range{s_1}{s_m}$ with a smaller subset of anchor points $\tilde S = \range{\tilde s_1}{\tilde s_{\tilde m}}$, with $\tilde m < m$.

In the following \cref{sec:discretekernelmachines,sec:continuouskernelmachines}, we will give two classes of examples of kernel machines, based on {\em discrete} and {\em continuous} filtrations of a Hilbert space.

\subsection{Finite depth kernel machines}
\label{sec:discretekernelmachines}

\begin{figure}
 \centering

 \subfloat[][\textbf{Fitting a 2D polynomial.}]{\includegraphics[width=.43\textwidth]{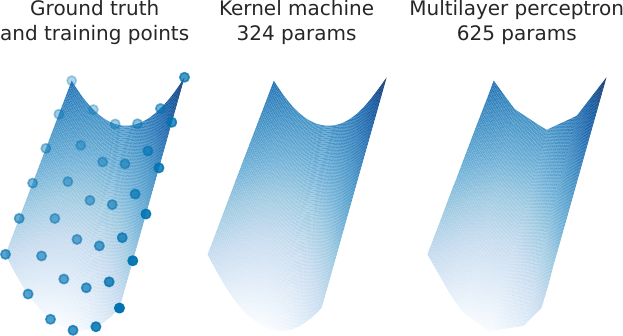}\label{fig:surfaces}}\qquad
 \subfloat[][\textbf{Performance over training.}]{\includegraphics[width=.43\textwidth]{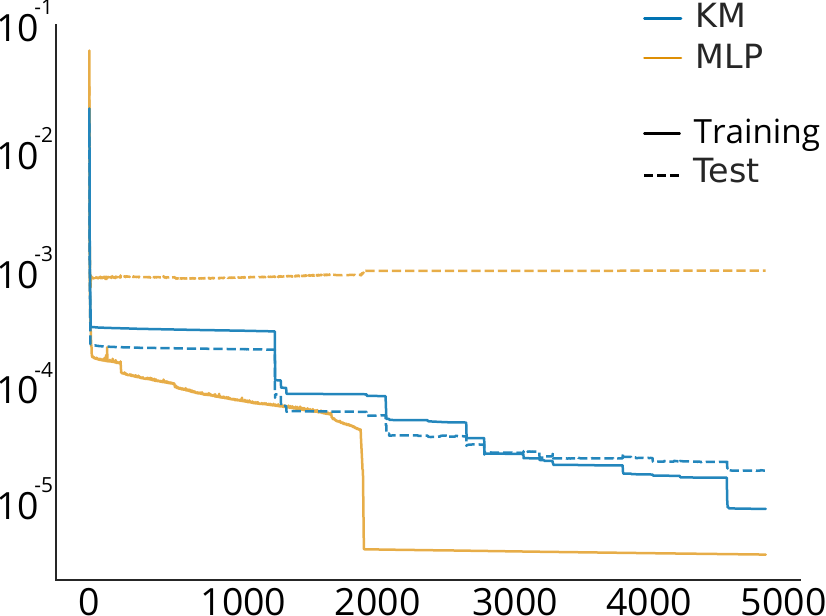}\label{fig:loss}}

 \subfloat[][\textbf{Interpolating from noisy
 data.}]{\includegraphics[width=.9\textwidth]{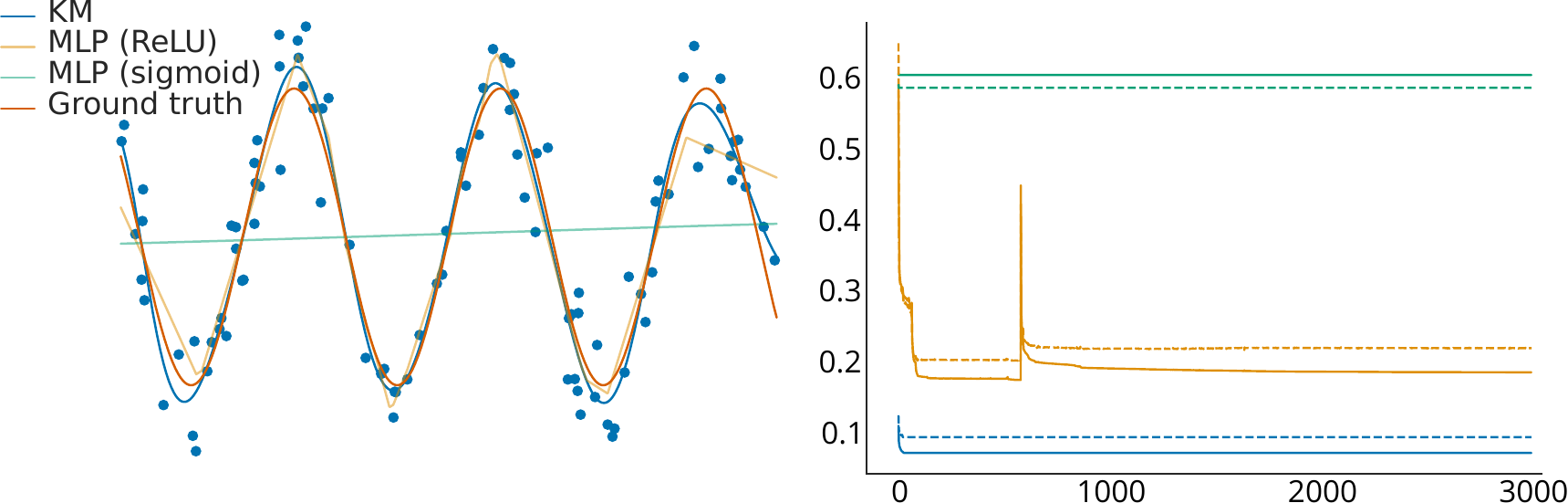}\label{fig:sine}}

 \subfloat[][\textbf{Regularization and loss.}]{\includegraphics[width=.9\textwidth]{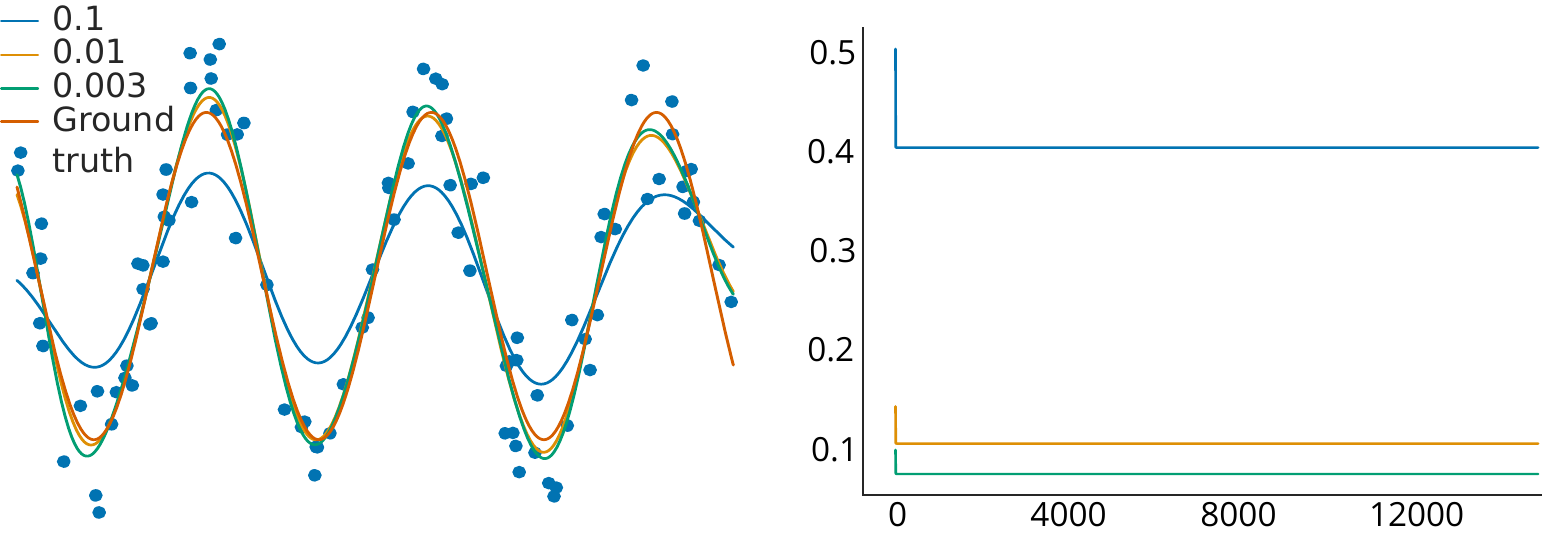}\label{fig:cost}}

 \caption{\textbf{Performance of finite-depth kernel machines.} We trained a kernel network and a multilayer perceptron with the same number of trainable parameters to fit a polynomial in two variables on a $6\times 6$ grid of points. While both achieve good performance, the kernel machine shows better decoding~\protect\subref{fig:surfaces} and a smaller loss function on the validation set (dashed line)~\protect\subref{fig:loss}. In~\protect\subref{fig:sine} we test robustness to noise of the kernel machine (514 parameters) in a noisy interpolation problem, comparing it with a 2 layers perceptron (609 parameters) with ReLU and sigmoid nonlinearities, respectively. In~\protect\subref{fig:cost} we show how different regularization coefficients affect the performance of the kernel machine.}
 \label{fig:discretekernelmachine}
\end{figure}

We associate a kernel machine to an arbitrary Hilbert space equipped with a finite filtration of closed subspaces.
\begin{definition}
    \label{def:discretecompatible}
    Let $X$ be a Hilbert space, equipped with a finite filtration of closed subspaces
    \begin{equation*}
        0 = X_0 \subseteq X_1 \subseteq X_2 \dots \subseteq X_n \subseteq X_{n+1} = X.
    \end{equation*}
    Let us consider a family of operator-valued kernels
    \begin{equation*}
        K_i\colon X_{i} \times X_{i} \rightarrow \L(X_{i+1} \cap X_i^\perp) \text{ for } i\in\range{0}{n}.
    \end{equation*}
    The {\em sum kernel machine} is given by
    \begin{equation*}
        K = \sum_{i=0}^n K_i.
    \end{equation*}
\end{definition}
The decomposition $K = \sum_{i=0}^n K_i$ corresponds to a decomposition of the RKHS $H \simeq \bigoplus_{i=0}^n H_i$, where, for every $i$, $H_i$ is the RKHS of $K_i$. In particular, given an endofunction $f\in H$, we have a unique decomposition $f = f_0 + \dots + f_n$, where $f_i \in H_i$ for all $i\in\{0,\dots, n\}$.
\begin{proposition}
    \label{prop:discretefiltrationkernel}
    Let $K$ be a sum kernel machine, and let $H$ be the corresponding RKHS.
    The application map
    \begin{align*}
        \varrho \colon H \times X &\rightarrow X\\
        (f, x) &\mapsto f(x)
    \end{align*}
    is a parametric machine. As a consequence, each endofunction $f \in H$ is a machine.
\end{proposition}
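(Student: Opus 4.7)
The plan is to invoke \Cref{cor:acyclicgraph} applied in the parametric Kleisli category $\Top_H$. Using the decomposition $H \simeq \bigoplus_{i=0}^n H_i$ already provided in the paper, I would write $\varrho = \varrho_0 + \dots + \varrho_n$, where each $\varrho_i\colon H \times X \to X$ is the continuous map
\begin{equation*}
    \varrho_i(f, x) = f_i(\pi_i(x)),
\end{equation*}
with $f_i$ the orthogonal projection of $f$ onto $H_i \subseteq H$ (so that $f_i\colon X_i \to X_{i+1}\cap X_i^\perp$ belongs to the RKHS of $K_i$, and its values lie naturally in $X$) and $\pi_i\colon X \to X_i$ the orthogonal projection. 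Continuity of $\varrho_i$ follows from continuity of orthogonal projections and of the evaluation map on the RKHS $H_i$.

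The central step would be to verify that, for every $j \ge i$, $\varrho_i$ does not depend on $\varrho_j$ in the sense of \Cref{def:independence}. For any morphisms $b, b'\colon Y \to X$ in $\Top_H$ and any $\lambda \in \R$, the claim reduces to
\begin{equation*}
    f_i\bigl(\pi_i(b(f,y) + \lambda \varrho_j(b'(f,y)))\bigr) = f_i\bigl(\pi_i(b(f,y))\bigr).
\end{equation*}
Since $\varrho_j$ takes values in $X_{j+1}\cap X_j^\perp \subseteq X_j^\perp$, and since $i \le j$ implies $X_i \subseteq X_j$ and hence $X_j^\perp \subseteq X_i^\perp$, the composition $\pi_i \circ \varrho_j$ vanishes. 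Linearity of $\pi_i$ then gives the required equality.

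It follows that the dependency graph of $\{\varrho_0, \dots, \varrho_n\}$ only has edges $i \to j$ with $i < j$; in particular it is acyclic and each $\varrho_i$ is square-zero with $S_{\varrho_i} = \id + \varrho_i$. Choosing the trivial layering $V_{i+1} = \{\varrho_i\}$, \Cref{cor:acyclicgraph} produces
\begin{equation*}
    S_\varrho = (\id + \varrho_n)(\id + \varrho_{n-1}) \cdots (\id + \varrho_0),
\end{equation*}
so $\varrho$ is a parametric machine. The in-particular statement, that each $f \in H$ is itself a machine in $\End_{\Top}(X)$, follows by freezing the parameter at that specific $f$.

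The main obstacle I anticipate is the bookkeeping around the identification $H \simeq \bigoplus H_i$ and the extension of each $f_i$ to an endofunction of $X$ with the right codomain and domain restriction. Once the $\varrho_i$ are phrased correctly, the independence check is the elementary observation that the filtration and its orthogonal gradeds are compatible with the prescribed codomains of the $K_i$; passing from the ordinary to the parametric Kleisli category adds only the mild continuity verifications sketched above, and the existence of the stable state is then a free consequence of the corollary.
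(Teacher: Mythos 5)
Your proposal is correct and follows essentially the same route as the paper's proof: decompose $\varrho$ as $\varrho_0+\dots+\varrho_n$ via $H\simeq\bigoplus_{i=0}^n H_i$, establish that $\varrho_{i}$ does not depend on $\varrho_{j}$ for $i\le j$, and conclude via \Cref{cor:acyclicgraph}. The only difference is that you spell out the independence check (the paper dismisses it as ``straightforward''), and your argument for it---$\varrho_j$ lands in $X_j^\perp\subseteq X_i^\perp$, which is annihilated by the linear projection $\pi_i$ through which $\varrho_i$ reads its input---is exactly the intended one.
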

\begin{proof}
    Let us write:
    \begin{equation*}
        \varrho = \varrho_0+\dots + \varrho_n,
    \end{equation*}
    where, for $i\in\range{0}{n}$, $\varrho_i$ is the application map corresponding to $K_i$.
    It is straightforward to show that, for $i_1 \le i_2$, $\varrho_{i_1}$ does not depend on $\varrho_{i_2}$. In particular, each $\varrho_i$ is square-zero, and thus a machine. Moreover, the dependency graph of $\{\varrho_0,\dots, \varrho_n\}$ is acyclic, as the source of each edge always has a smaller index than the target. It follows from~\cref{cor:acyclicgraph} that $\varrho_0+\dots + \varrho_n$ is a machine, whose stable state can be computed via \cref{eq:computemachinegraph}.
\end{proof}

In classical terms, kernel machines in $H$ correspond to a network with $n+1$ layers and {\em all} shortcut connections. While in classical deep neural networks this would cause an explosion in the number of parameters, which would grow quadratically with the number of layers, in the case of small datasets and kernel machines this is not the case. A general kernel machine, on a training set with $m$ datapoints $\range{s_1}{s_m}$, can be expressed as:
\begin{equation}
    \label{eq:fullkernelmachine}
    X\ni x \mapsto \sum_{i=0}^n\sum_{j=1}^m K_i(x, h(s_j)) c_j,
\end{equation}
where $h$ is the stable state of the kernel machine (see~\cref{thm:representer}).
Each $c_j$ is a vector of $\textnormal{dim}(X)$ free parameters.
Therefore the number of parameters, $m \cdot\textnormal{dim}(X)$, grows linearly, rather than quadratically, with the number of layers.

\paragraph{Finite-depth kernel machines on small datasets.}
Small datasets are the natural testbed for finite-depth kernel machines given the architecture described by~\cref{eq:fullkernelmachine} and optimality guarantees obtained in~\cref{thm:representer}.
We implemented this architecture as a PyTorch module (implementation available at  \url{https://github.com/LimenResearch/kernel_machines}) and as a Julia~\cite{bezansonJuliaFreshApproach2017} package (\url{https://github.com/LimenResearch/KernelMachines.jl}), which relies on Zygote.jl~\cite{innesDonUnrollAdjoint2019} for automatic differentiation and on Optim.jl~\cite{mogensen2018optim} for optimization methods.
We chose to work with radial basis function kernels of the form
\[
K(u,v) = \exp(-\Vert u - v\Vert^2).
\]
We first test the architecture on a surface-fitting task, with ground truth $p(x,y) = (2x - 1)^2 + 2 y + xy - 3$. The training set consists of 36 points obtained by evaluating $p$ on a uniform $6\times 6$ grid in $[0,1]^2$. Test points are randomly chosen in the same domain (see~\cref{fig:surfaces}). We report the performance of the kernel machine (324 parameters) in~\cref{fig:loss} and compare it with a two-layers perceptron (625 parameters). Although both architectures are regularized, we can observe how the perceptron's performance is affected by overfitting, while the kernel machine reaches similar loss values on the training and test set. We then test the same kernel machine on the interpolation of noisy data, see~\cref{fig:sine}. Again, we compare its performance against 2-layer perceptrons with ReLU and sigmoid activation functions, respectively. We train on 100 random points obtained by sampling from a noisy sine. The kernel machine reaches the best performance on both the training and the validation set. Finally, on the same task, we test in~\cref{fig:cost} the robustness of the kernel machine to variation of the regularization cost.

\subsection{Infinite depth kernel machines}
\label{sec:continuouskernelmachines}

To translate the discrete filtration kernel described in~\cref{sec:discretekernelmachines} to the continuous case, we replace the discrete filtration with a continuous one. Let $X$ be a Hilbert space, $t_0 < T \in \R$, and
\begin{equation}
    \label{eq:continuousfiltration}
    0 = X_{t_0} \subseteq \dots
    \subseteq X_t \subseteq
    \dots \subseteq X_T = X
\end{equation}
a filtration of closed subspaces of $X$. We need a technical assumption to proceed in the continuous case.
\begin{definition}
    \label{def:pointwisecontinuity}
    Let $X$ be a Hilbert space. Let $\{X_t\}_{t \in [t_0, T]}$ be a filtration on $X$, and let $\pi_t$ denote the orthogonal projection on $X_t$, for $t \in [t_0, T]$. We say that $\{X_t\}_{t \in [t_0, T]}$ is {\em continuous} if, for all $x \in X$, the function
    \begin{align*}
        [t_0, T] &\rightarrow X \\
        t &\mapsto \pi_t(x)
    \end{align*}
    is continuous with respect to the norm on $X$.
\end{definition}

\begin{theorem}
    \label{thm:continuouskernelmachine}
    Let $X$ be a Hilbert space, with a continuous filtration $\{X_t\}_{t \in [t_0, T]}$, and corresponding orthogonal projections $\{\pi_t\}_{t \in [t_0, T]}$. Let $K\colon X \times X \rightarrow \L(X)$ be an operator-valued kernel, and $H$ be its RKHS. Finally, let
    \begin{equation*}
        \varrho \colon H \times X \rightarrow X
    \end{equation*}
    be the application map. Let us assume that
    \begin{itemize}
        \item  the distance induced by $K$ is bounded by a multiple of the norm-induced distance on $X$,
        \item for all $t \in [t_0, T]$, for all $x_1, x_2 \in X$,
        \begin{equation*}
            K(x_1, x_2) \pi_t = \pi_t K(x_1, x_2) = \pi_t K(\pi_t x_1, \pi_t x_2).
        \end{equation*}
    \end{itemize}
    Then $\varrho$ is a parametric machine, that is to say $K$ is a kernel machine.
\end{theorem}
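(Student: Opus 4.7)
The plan is to adapt the Volterra machine proof of~\cref{sec:VolterraMachines} to this continuous-filtration setting. Fix $f\in H$; the first hypothesis combined with the reproducing property yields a Lipschitz bound $\|f(x)-f(x')\|_X \le C\|f\|_H\,\|x-x'\|_X$, so $f$ is globally Lipschitz on $X$ with some constant $L=L(f)$. Choose a partition $t_0=s_0<s_1<\dots<s_N=T$ with $N$ to be determined, and set $\pi^i:=\pi_{s_i}-\pi_{s_{i-1}}$. Since $\pi_{t_0}=0$ and $\pi_T=\id_X$, the projections $\pi^i$ onto $X_{s_i}\cap X_{s_{i-1}}^\perp$ are mutually orthogonal and sum to the identity. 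Define $\varrho_i(f,x):=\pi^i f(x)$, so that $\varrho=\sum_{i=1}^N \varrho_i$, and reduce the problem to verifying the hypotheses of~\cref{cor:acyclicgraph}.

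Acyclicity of the dependency graph of $\{\varrho_1,\dots,\varrho_N\}$ will follow from the second hypothesis. Expanding $f$ through its kernel representation and applying $\pi_t K(x_1,x_2)=\pi_t K(\pi_t x_1,\pi_t x_2)$ termwise gives $\pi_t f(x)=\pi_t f(\pi_t x)$, so $\varrho_i(f,x)=\pi^i f(\pi_{s_i} x)$ lands in $X_{s_i}\cap X_{s_{i-1}}^\perp$ and depends on $x$ only through $\pi_{s_i} x$. For $i<j$, any output of $\varrho_j$ lies in $X_{s_{j-1}}^\perp\subseteq X_{s_i}^\perp$, so $\pi_{s_i}$ annihilates it and the identity $\varrho_i(f,\,b+\lambda\varrho_j(f,b'))=\varrho_i(f,b)$ follows, i.e. $\varrho_i$ is independent of $\varrho_j$ in the sense of~\cref{def:independence}. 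The natural ordering $1<2<\dots<N$ is then already a layering.

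The crux, and the step I expect to be the main obstacle, is showing that each $\varrho_i$ is itself a machine. Mirroring the Volterra estimate, I would aim for a contraction bound of the form $\|\varrho_i(f,x)-\varrho_i(f,x')\|_X \le \sqrt{h}\,C(L,T-t_0)\,\|x-x'\|_X$ with $h:=\max_i(s_i-s_{i-1})$. The natural object to analyze is the $X$-valued curve $\psi(t):=\pi_t f(\pi_t x)-\pi_t f(\pi_t x')$, which is continuous in $t$ by continuity of the filtration together with the Lipschitz property of $f$, vanishes at $t_0$, and satisfies $\varrho_i(f,x)-\varrho_i(f,x')=\psi(s_i)-\psi(s_{i-1})$. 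The Parseval-type identity
\begin{equation*}
\sum_{i=1}^N \|\pi^i(f(x)-f(x'))\|_X^2 \;=\; \|f(x)-f(x')\|_X^2 \;\le\; L^2\|x-x'\|_X^2
\end{equation*}
gives control of the total, and the missing quantitative ingredient---most naturally obtained by representing $\psi$ as an integral against the resolution of identity associated to $\{\pi_t\}$ and applying Cauchy--Schwarz---should transfer this sum bound into a per-interval bound scaling like $\sqrt{h}$. Taking $N$ large enough to make $\sqrt{h}\,C<1$ renders each $\varrho_i$ a contraction, hence a machine. \cref{cor:acyclicgraph} then produces the stable state of $\varrho$ as a composition given by~\cref{eq:computemachinegraph}; the continuity in $f$ of each step (the decomposition is $H$-linear and the per-layer Banach fixed points depend continuously on parameters) promotes the conclusion to the parametric setting, so $\varrho$ is a parametric machine and $K$ is a kernel machine. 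The sharpest technical point will be turning the merely SOT-continuity of the filtration into the quantitative per-interval estimate, and it is precisely the first hypothesis---by reducing the problem to Lipschitz control on successive slices of $X$---that makes this possible.
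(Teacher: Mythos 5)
Your overall architecture is the right one and matches the paper's: slice $\varrho$ along the filtration into $\varrho_i(f,x)=(\pi_{s_i}-\pi_{s_{i-1}})f(x)$, check that $\varrho_i$ does not depend on $\varrho_j$ for $i<j$ via the commutation hypothesis, and invoke \cref{cor:acyclicgraph}; the Lipschitz bound $\lVert f(x)-f(x')\rVert_X\le\lambda\lVert f\rVert_H\lVert x-x'\rVert_X$ you extract from the reproducing property is also exactly the estimate the paper uses. But the step you yourself flag as the crux does not go through as proposed. The Parseval identity $\sum_i\lVert\pi^i(f(x)-f(x'))\rVert^2=\lVert f(x)-f(x')\rVert^2$ gives no per-interval gain: it is perfectly consistent with the entire mass of $f(x)-f(x')$ sitting in a single slice $X_{s_i}\cap X_{s_{i-1}}^\perp$, and which slice that is can vary with the pair $(x,x')$. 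Continuity of the filtration is only strong-operator continuity, so $\lVert\pi_{s_i}-\pi_{s_{i-1}}\rVert_{\L(X)}$ is typically $1$ for every nondegenerate subinterval, and the uniform continuity of $t\mapsto\pi_t v$ makes $\lVert\pi^i v\rVert$ small only for each \emph{fixed} $v$, with a mesh depending on $v$. There is no resolution-of-identity integral representation of $f(x)-f(x')$ with a density against which Cauchy--Schwarz would produce your $\sqrt{h}$ factor (that device works for Volterra machines precisely because the operator is literally an integral over $s$, so restricting to a subinterval of length $h$ inserts the factor $h$; no such representation is available for an abstract continuous filtration). So no single partition makes every $\varrho_i$ a contraction uniformly in $(x,x')$ by this route.

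The paper closes this gap by making the slice small in the $H$-norm of the \emph{parameter} rather than in the $X$-norm of individual increments. Writing $f=\sum_{j}K(\anon,x_j)c_j$, one first truncates to a finite sum $\hat f=\sum_{j=1}^m K(\anon,x_j)c_j$ with $\lVert f-\hat f\rVert_H\le\frac{1}{4\lambda}$; the commutation hypothesis gives $(\pi_{t_i}-\pi_{t_{i-1}})\hat f=\sum_{j=1}^m K(\anon,x_j)(\pi_{t_i}-\pi_{t_{i-1}})c_j$, and uniform continuity of $t\mapsto\pi_t c_j$ for the \emph{finitely many} $c_j$ lets one choose $N$ with $\lVert(\pi_{t_i}-\pi_{t_{i-1}})\hat f\rVert_H\le\frac{1}{4\lambda}$ for all $i$. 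Then every $\tilde f$ in the ball $B(f,\frac{1}{4\lambda})\subseteq H$ has slices of $H$-norm at most $\frac{3}{4\lambda}$, and your own Lipschitz estimate applied to the slice (whose Lipschitz constant is $\lambda$ times its $H$-norm) yields a contraction constant $\frac34$. This is the missing quantitative ingredient, and as a bonus it holds uniformly on a neighborhood of $f$ in $H$, which is what upgrades ``each $f$ is a machine'' to the parametric statement --- a point your proposal only gestures at in its final sentence.
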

\begin{proof}
    Let $\lambda > 0$ be such that, for all $x_1, x_2 \in X$,
    \begin{equation*}
        \lVert K(x_1, x_1)+K(x_2, x_2)-2K(x_1, x_2)\rVert_{\L(X)}
        \le \lambda^2\lVert x_1 - x_2\rVert_X^2.
    \end{equation*}
    Let
    \begin{equation*}
        H \ni \xi = \sum_{j=1}^\infty K(\anon, x_j) c_j.
    \end{equation*}
    Let $m$ be such that
    \begin{equation*}
        \left\Vert \xi - \hat{\xi} \right\Vert \le \frac{1}{4\lambda},
        \text{ where }
        \hat{\xi} := \sum_{j=1}^m K(\anon, x_j) c_j.
    \end{equation*}
    As the filtration $\{X_t\}_{t \in [t_0, T]}$ is continuous, for all $c\in X$, the map $t \mapsto \pi_t(c)$ is continuous and, therefore, uniformly continuous. In addition, $\pi_t$ commutes with $K$ by hypothesis, therefore we can choose $N$ such that, given $t_i := t_0 + \frac{i}{N}(T-t_0)$, for $i\in\range{1}{N}$,
    \begin{equation*}
        \left\Vert \left(\pi_{t_{i}}-\pi_{t_{i-1}}\right) \hat{\xi} \right\Vert
        = \left\Vert \sum_{j=1}^m K(\anon, x_j)
            \left(\pi_{t_{i}}-\pi_{t_{i-1}}\right)c_j \right\Vert
        \le \frac{1}{4\lambda}.
    \end{equation*}
    Let $B\left(\xi, \frac{1}{4\lambda}\right)$ be an open ball of radius $\frac{1}{4\lambda}$ around $\xi$. For each $\tilde{\xi} \in B\left(\xi, \frac{1}{4\lambda}\right)$, and for all $i\in\range{1}{N}$,
    \begin{equation*}
        \left\Vert \left(\pi_{t_{i}}-\pi_{t_{i-1}}\right) \tilde{\xi} \right\Vert \le \frac{3}{4\lambda}.
    \end{equation*}
    For all $i\in\range{1}{N}$, for all $\tilde{\xi} \in B\left(\xi, \frac{1}{4\lambda}\right)$, let $\tilde{\xi}_{i} = \left(\pi_{t_{i}}-\pi_{t_{i-1}}\right) \tilde{\xi}$. Then, for all $x_1, x_2, c \in X$,
    \begin{align*}
        \langle \tilde{\xi}_{i} (x_1-x_2), c\rangle_{X}^2
        &= \langle \tilde{\xi}_{i}, K(\anon, x_1)c - K(\anon, x_2)c\rangle_{H}^2 \\
        &\le \lVert \tilde{\xi}_{i} \rVert_{H}^2 \cdot
            \lVert K(\anon, x_1)c - K(\anon, x_2)c\rVert_{H}^2 \\
        &\le \left(\frac{3}{4\lambda}\right)^2 \cdot \langle c,
            \left(K(x_1, x_1)+K(x_2, x_2)-2K(x_1, x_2)\right)c\rangle_{X}\\
        &\le \left(\frac{3}{4\lambda}\right)^2 \cdot \lVert c \rVert_X^2
            \lVert K(x_1, x_1)+K(x_2, x_2)-2K(x_1, x_2)\rVert_{\L(X)}\\
        &\le \left(\frac{3}{4}\right)^2 \lVert c \rVert_X^2
            \lVert x_1 - x_2\rVert_X^2.
    \end{align*}
    By choosing $c = \tilde{\xi}_{i} (x_1-x_2)$, it follows that
    \begin{equation*}
        \lVert\tilde{\xi}_{i} (x_1-x_2)\rVert \le \frac{3}{4}\lVert x_1 - x_2\rVert,
    \end{equation*}
    hence $\tilde{\xi}_{i}$ is a contraction. For $i_1 < i_2$, $\tilde{\xi}_{i_1}$ does not depend on $\tilde{\xi}_{i_2}$, therefore
    \begin{equation*}
        \tilde{\xi} = \tilde{\xi}_{1}+\dots+\tilde{\xi}_{N}
    \end{equation*}
    is a machine, thanks to~\cref{cor:acyclicgraph}. As the same computing procedure can be applied to to all $\tilde{\xi}$ in a neighborhood of $\xi$, we have shown that $\varrho\colon H \times X \rightarrow X$ is a continuous parametric machine, hence $K$ is a kernel machine.
\end{proof}

\begin{figure}

 \subfloat[][\textbf{Performance on reduced MNIST varying the architecture}]{\includegraphics[width=.9\textwidth]{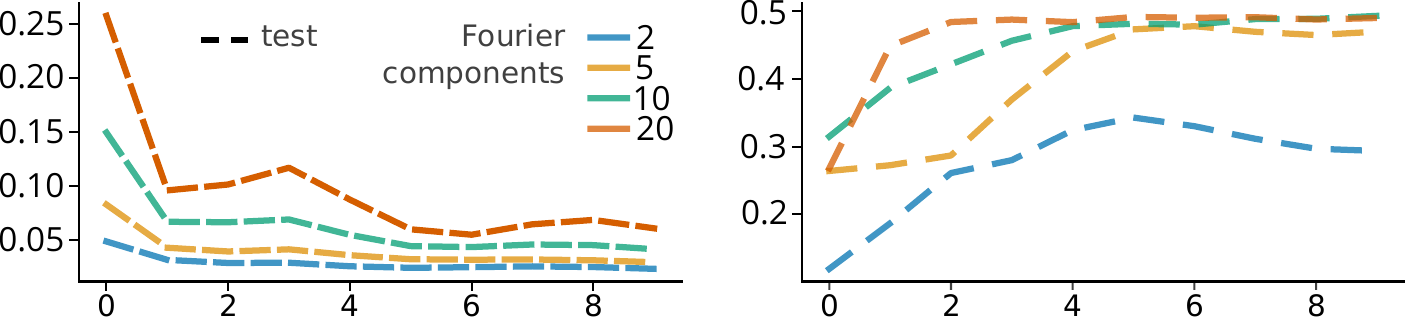}\label{fig:ckm_max_freq_loss}}

\subfloat[][\textbf{$c_j$ histograms}]{\includegraphics[width=.9\textwidth]{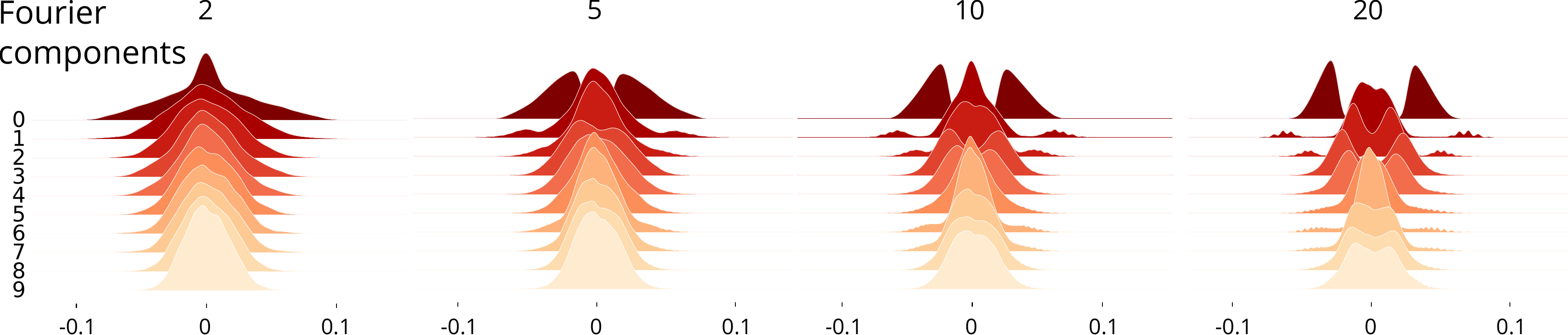}\label{fig:ckm_max_freq_hists}}

\subfloat[][\textbf{Performance on reduced MNIST, varying the cost}]{\includegraphics[width=.9\textwidth]{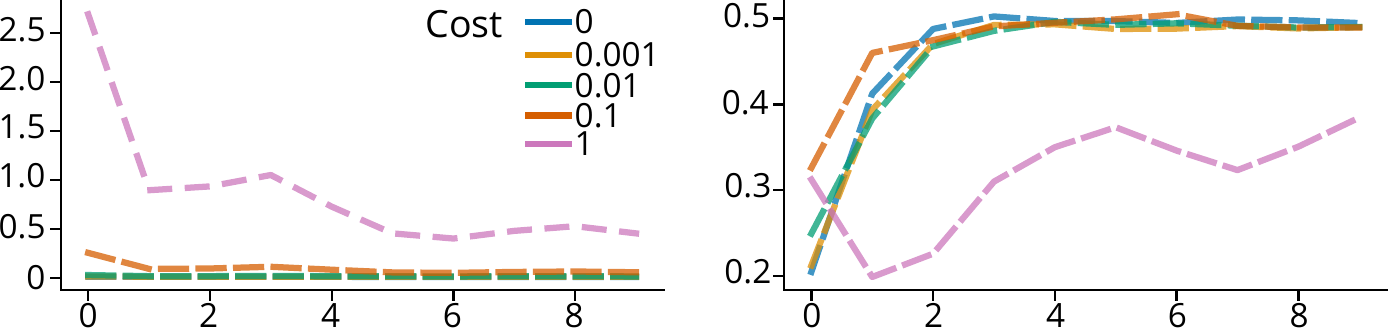}\label{fig:ckm_cost_loss}}

\subfloat[][\textbf{$c_j$ histograms}]{\includegraphics[width=.9\textwidth]{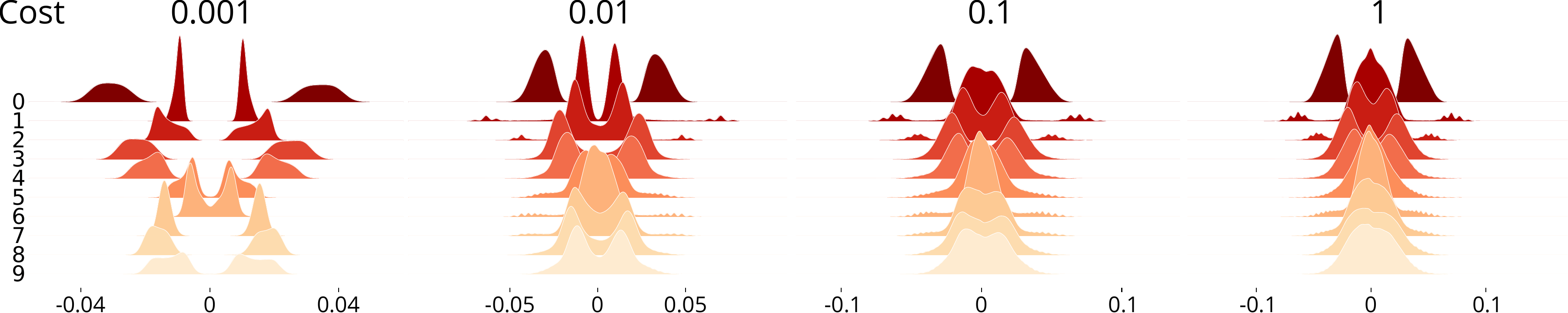}\label{fig:ckm_cost_hists}}

 \caption{\textbf{Performance of infinite-depth kernel machines.} We trained an infinite-depth kernel machine on one random sample per class of the MNIST dataset and tested it on the 10000 test samples. We approximated $c_j$ with a truncated Fourier series. In~\protect\cref{fig:ckm_max_freq_loss} we report the test loss and accuracy obtained varying the number of Fourier components used to approximate $c_j$ and fixing the regularization cost to $0.1$. According to our expectation an increase in number of Fourier components corresponds to an increase in performance (best $\approx 50\%$). \protect\Cref{fig:ckm_max_freq_hists} shows how $c_j$ evolves during training (epochs on the $y$-axis). Setting the number of Fourier components to 20, the change in performance caused by a variation of the regularization cost is reported in~\protect\cref{fig:ckm_cost_loss}, and in~\protect\cref{fig:ckm_cost_hists} the corresponding $c_j$ histograms.}
 \label{fig:continuouskernelmachine}
\end{figure}

\subsubsection{Computing continuous kernel machines efficiently}

Let us consider a particular case of filtration on a Hilbert space. Let $X = \Lspace$, and for all $t\in[t_0, T]$ let $X_t = L^2([t_0, t], n)$. Let $k\colon \R^n \times \R^n \rightarrow \L(\R^n)$ be a finite-dimensional continuous operator-valued kernel. We can consider the following operator-valued kernel
\begin{equation*}
    \left(K(x_1, x_2)x\right)(t) = \int_{t_0}^t k(x_1(s), x_2(s)) x(t) ds,
\end{equation*}
where $x_1,x_2,x\in X$ and $t\in [t_0, T]$.
Let $H$ be the RKHS corresponding to $K$. Let us further assume that the distance induced by $k$ on $\R^n$ is bounded by a multiple of the Euclidean distance. Then the distance induced by $K$ on $X$ is bounded by a multiple of the $L^2$ distance. By~\cref{thm:continuouskernelmachine}, $H \times X \rightarrow X$ is a parametric machine.

Let $f = \sum_{j=1}^m K(\anon, x_j) c_j$ be a machine in $H$. Let $\psi\in \Lspace$ be an initial condition. The stable state is given by the solution to the following Volterra equation:
\begin{equation}
    \label{eq:stablestatecontinuouskernel}
    u(t) = \psi(t) + \int_{t_0}^t \sum_{j=1}^m k(u(s), x_j(s))c_j(t)ds,
    \text{ for all } t \in [t_0, T],
\end{equation}
which can be computed efficiently using \cref{thm:fastvolterra}.

\paragraph{Infinite-depth kernel machines on small datasets.}
\Cref{eq:stablestatecontinuouskernel} gives an efficient way to implement and compute infinite-depth kernel machines. This construction satisfies the optimality result obtained in~\cref{thm:representer}. Thus, we implemented infinite-depth kernel machines as a PyTorch module and tested them on small datasets (implementation available at \url{https://github.com/LimenResearch/volterra_machines}). Intuitively, an infinite-depth kernel machine is a continuous architecture adding to state-of-the-art implementation, such as neural ODEs, all shortcut connections in time. Thus, we test infinite-depth kernel machines on a reduced version of the MNIST dataset~\cite{deng2012mnist} obtained considering one random sample per class (i.e. ten training images). In the function space defined by the machine, the architecture is chosen by selecting an incomplete basis on which the parameters of the machine are expressed. In our simulations, we consider radial basis kernel functions and an incomplete Fourier basis. In~\cref{fig:ckm_max_freq_loss}, we report the performance (loss and accuracy on the 10000 image MNIST test set) of the kernel machine when varying the architecture, i.e. varying the number of considered Fourier components. As expected, an increase in the number of such components causes an increase in performance. An histogram of the parameters $c_j$ (see~\cref{eq:stablestatecontinuouskernel}) is shown in~\cref{fig:ckm_max_freq_hists}. \Cref{fig:ckm_cost_loss,fig:ckm_cost_hists} show the change in performance of the kernel machine while varying the regularization cost.

\section{Conclusions}

We provide a solid topological and functional foundation for the study of deep neural networks. Borrowing ideas from functional analysis and graph theory, we define the abstract notion of {\em machine}, whose {\em stable state} generalizes the computation of a feedforward neural network. It is a unified concept that encompasses both manually designed neural network architectures, as well as their continuous counterpart such as Neural ODEs~\cite{chenNeuralOrdinaryDifferential2018}.

We take as starting point linear and nonlinear continuous maps between topological vector spaces. This alternation between linear and nonlinear components is one of the key ingredients of the success of deep neural networks, as it allows one to obtain complex functions as a composition of simpler ones. The notion of composition of layers in neural networks is unfortunately somewhat ill-defined, especially in the presence of shortcut connections and non-standard architectures. In the proposed {\em machine framework}, the composition is replaced by the sum. We describe independence conditions to ensure that the sum of machines is again a machine, in which case we can compute its stable state (forward pass) explicitly. This may seem counterintuitive, as the sum is a commutative operation, whereas the composition is not. However, in our framework, it is the {\em dependency graph} of a collection of machines that determines the order of composition.

Basic combinations of simple machines---{\em square-zero} and {\em contracting}---cover a lot of ground.
In particular, using finite sums of square-zero machines (discrete architectures), we recover classical neural networks, including architectures with shortcut connections.
In this setting, we provide a first simple application. Starting with a convolutional network with maximal connectivity, the architecture is automatically sculpted during training until a minimal architecture with robust performance is found. This algorithm is available as a PyTorch module.
Contracting, {\em infinite-depth} architectures generalize neural ODEs~\cite{chenNeuralOrdinaryDifferential2018}. More generally, we prove that, under some Lipschitz and continuity conditions, nonlinear integral Volterra equations of the second kind are machines. We provide an efficient procedure, with corresponding PyTorch implementation, to solve such equations in a special case.

Our approach meshes well with {\em deep kernel learning}~\cite{choKernelMethodsDeep2009,leeDeepNeuralNetworks2017,mairalEndtoEndKernelLearning2016,mairalConvolutionalKernelNetworks2014,nealPriorsInfiniteNetworks1996}, an attempt to combine modern advances in deep learning with classical kernel methods~\cite{hofmannKernelMethodsMachine2008}.
We believe this is particularly promising when working with small datasets, a scenario where deep neural networks have traditionally been less successful.
We introduce the notion of {\em kernel machine}, a Hilbert space whose points are machines.
There, given a specific loss function, we can search for machines that minimize it and that have a small norm. Even though the space is potentially infinite-dimensional, we prove an analog of the representer theorem, which determines a finite-dimensional subspace where optimal solutions can be found. This subspace can be quite large in practice. However, the norm can be used to regularize solutions.

We propose and implement in PyTorch two examples of kernel machines, with finite- and infinite-depth.
First, using kernels on finite filtrations of Hilbert spaces, we build finite-depth kernel machines. They correspond to neural networks with all shortcut connections.
In our simulations, with a comparable number of trainable parameters, kernel machines outperform multilayer perceptrons in toy problems with no more than $100$ training data points.
Second, using continuous filtrations on function spaces, we build infinite-depth kernel machines. While preserving the advantages of a kernel-based approach (optimality guarantees), infinite-depth kernel machines introduce the concept of shortcut connection in neural ODEs. Indeed, given a kernel, the value of the stable state (output) of the machine at time $t$ is obtained considering a restriction of the kernel to the interval $[t_0, t]$, i.e. all shortcuts up to time $t$.

The parameters to be optimized are functions in a Hilbert space.
As mentioned above, the infinite-dimensional function space represents a continuous architecture with {\em all} shortcut connections.
Different subspaces of this large function space correspond to different classical architectures. For instance, discrete architectures can be recovered via the subspace of piecewise constant functions on a given grid (grid-based approximation). As a consequence, a key ingredient of our method is the choice of a sculpting strategy to reduce the dimensionality of the function space. We explore two distinct, complementary approaches. First, we manually select an incomplete (finite) basis to make the problem tractable. Rather than limiting the {\em depth} of our architecture by approximating its parameters on a grid, we choose a low-frequency approximation, working with truncated Fourier series. Then, we further reduce the subspace of viable parameter values by penalizing points of the Hilbert space with a large norm.
This is, to the best of our knowledge, a novel approach to {\em Neural Architecture Search}~\cite{elskenNeuralArchitectureSearch}, where different architectures can be chosen (and compared) by selecting a scalar product and an incomplete basis of an infinite-dimensional function space.

\section*{Author contributions}

P.V. and M.G.B devised the project. P.V. and M.G.B developed the mathematical framework. P.V. and M.G.B. developed the software to implement the framework. P.V. wrote the original draft. M.G.B. reviewed and edited.

\bibliographystyle{abbrv}
\bibliography{SmallMachines}

\end{document}